\tikzstyle{dotted circle}=[tikzit category=nodes, shape=circle, draw=black, minimum size=0.6cm, dashed, thick]
\tikzstyle{white node}=[fill=white, tikzit category=nodes, shape=circle, draw=black, minimum size=0.6cm, thick]
\tikzstyle{black node}=[fill=black, tikzit category=nodes, shape=circle, draw=black, minimum size=0.6cm, thick]
\tikzstyle{red node}=[fill=red, tikzit category=nodes, shape=circle, draw=black, minimum size=0.6cm, thick]
\tikzstyle{blue node}=[fill=blue, shape=circle, draw=black, tikzit category=nodes, minimum size=0.6cm, thick]
\tikzstyle{green node}=[tikzit fill=green, fill=green, shape=circle, draw=black, tikzit category=nodes, minimum size=0.6cm, thick]
\tikzstyle{yellow square}=[draw=black, fill=yellow, shape=rectangle, minimum size=0.6cm, thick]
\tikzstyle{blue node 2}=[fill={rgb,255: red,128; green,0; blue,128}, draw=black, shape=circle, tikzit fill=blue, minimum size=0.6cm, thick]
\tikzstyle{arrow}=[->, thick]
\tikzstyle{dashed arrow}=[->, dashed, thick]
\tikzstyle{dashed edge}=[<->, dashed, thick]
\tikzstyle{blue pointer}=[->, draw=blue, thick]
\tikzstyle{red pointer}=[->, draw=red, thick]
\tikzstyle{thick blue pointer}=[->, draw=blue, very thick]
\tikzstyle{thick red pointer}=[->, draw=red, very thick]
\tikzstyle{thick pointer}=[->, very thick]
\tikzstyle{thick line}=[-, very thick]
\def\eqref#1{equation~\ref{#1}}
\def\1{\bm{1}}
\def\re{{\textnormal{e}}}
\DeclareMathAlphabet{\mathsfit}{\encodingdefault}{\sfdefault}{m}{sl}
\SetMathAlphabet{\mathsfit}{bold}{\encodingdefault}{\sfdefault}{bx}{n}
\def\gN{{\mathcal{N}}}
\newcommand{\E}{\mathbb{E}}
\newcommand{\R}{\mathbb{R}}
\newcommand{\Var}{\mathrm{Var}}
\DeclareMathOperator*{\argmin}{arg\,min}
\setlist[itemize]{noitemsep,topsep=-6pt}
\setlist[enumerate]{noitemsep,topsep=-6pt}
\begin{document}

\twocolumn[
\icmltitle{
AR-DAE: Towards Unbiased Neural Entropy Gradient Estimation
}

\icmlsetsymbol{equal}{*}

\begin{icmlauthorlist}
\icmlauthor{Jae Hyun Lim}{mila,udem}
\icmlauthor{Aaron Courville}{mila,udem,cifar,ccai}
\icmlauthor{Christopher Pal}{mila,poly,ccai}
\icmlauthor{Chin-Wei Huang}{mila,udem}
\end{icmlauthorlist}

\icmlaffiliation{mila}{Mila}
\icmlaffiliation{poly}{Polytechnique Montr\'eal}
\icmlaffiliation{udem}{Universit\'e de Montr\'eal}
\icmlaffiliation{cifar}{CIFAR fellow}
\icmlaffiliation{ccai}{Canada CIFAR AI Chair}

\icmlcorrespondingauthor{Jae Hyun Lim}{jae.hyun.lim@umontreal.ca}
\icmlcorrespondingauthor{Chin-Wei Huang}{chin-wei.huang@umontreal.ca}

\icmlkeywords{deep auto-encoders, information theory and estimation, variational inference, likelihood-free inference, generative models, reinforcement learning, maximum-entropy}

\vskip 0.3in
]

\printAffiliationsAndNotice{}  %

\begin{abstract}
Entropy is ubiquitous in machine learning, %
but it is in general intractable to compute the entropy of the distribution of an arbitrary continuous random variable.
In this paper, we propose the \emph{amortized residual denoising autoencoder} (AR-DAE) %
to approximate the gradient of the log density function, which can be used to estimate the gradient of entropy.
Amortization allows us to 
significantly reduce the error of the gradient approximator
by approaching asymptotic optimality of a regular DAE, in which case the estimation is in theory unbiased. %
We conduct theoretical and experimental analyses on the approximation error of the proposed method, as well as extensive studies on heuristics to ensure its robustness.~Finally, using the proposed gradient approximator to estimate the gradient of entropy, we demonstrate state-of-the-art performance on density estimation with variational autoencoders and continuous control with soft actor-critic.
\end{abstract}

\section{Introduction}
\label{sec:introduction}
Entropy is an information theoretic measurement of uncertainty that has found many applications in machine learning. 
For example, it can be used to incentivize exploration in reinforcement learning (RL) \citep{HaarnojaTAL17/icml, HaarnojaZAL18/icml}; prevent mode-collapse of generative adversarial networks (GANs) \citep{BalajiHCF19/icml, Dieng2019pregan}; and calibrate the uncertainty of the variational distribution in approximate Bayesian inference. 
However, it is in general intractable to compute the entropy of an arbitrary random variable.

In most applications, one actually does not care about the quantity of entropy itself, but rather how to manipulate and control this quantity as part of the optimization objective. 
In light of this, we propose to approximately estimate the gradient of entropy so as to maximize or minimize the entropy of a data sampler. 
More concretely, we approximate the gradient of the log probability density function of the data sampler.
This is sufficient since the gradient of its entropy can be shown to be the expected value of the \textit{path derivative}~\citep{roeder2017sticking}. 
We can then plug in a gradient approximator to enable stochastic backpropagation.

We propose to use the \emph{denoising autoencoder} (DAE, \citet{VincentLBM08/icml}) to approximate the gradient of the log density function, which is also known as \emph{denoising score matching} \citep{vincent2011connection}. 
It has been shown that the optimal reconstruction function of the DAE converges to the gradient of the log density as the noise level $\sigma$ approaches zero \citep{AlainB14/jmlr}. 
In fact, such an approach has been successfully applied to recover the gradient field of the density function of high-dimensional data such as natural images \citep{song2019generative}, which convincingly shows DAEs can accurately approximate the gradient. 
However, in the case of entropy maximization (or minimization), 
the non-stationarity of the sampler's distribution poses a problem for optimization.
On the one hand, the log density gradient is recovered only asymptotically as $\sigma\rightarrow0$.
On the other hand, the training signal vanishes while a smaller noise perturbation is applied, which makes it hard to reduce the approximation error due to suboptimal optimization. 
The fact that the sampler's distribution is changing makes it even harder to select a noise level that is sufficiently small.
Our work aims at resolving this no-win situation. 

In this work, we propose the \emph{amortized residual denoising autoencoder} (AR-DAE), which is a conditional DAE of a residual form that takes in $\sigma$ as input. 
We condition the DAE on $\sigma=0$ at inference time to approximate the log density gradient while sampling non-zero $\sigma$ at training, which allows us to train with $\sigma$ sampled from a distribution that covers a wide range of values. 
If AR-DAE is optimal, we expect to continuously generalize to $\sigma=0$ to recover the log density gradient, which can be used as an unbiased estimate of the entropy gradient. 
We perform ablation studies on the approximation error using a DAE, and show that our method provides significantly more accurate approximation than the baselines. 
Finally, we apply our method to improve distribution-free inference for variational autoencoders \citep{kingma2013auto,rezende2014stochastic} and soft actor-critic \citep{HaarnojaZAL18/icml} for continuous control problems in reinforcement learning. 
As these tasks are non-stationary, amortized (conditional), and highly structured, it demonstrates AR-DAE can robustly and accurately approximate log density gradient of non-trivial distributions given limited computational budgets.

\section{Approximate entropy gradient estimation}

\subsection{Background on tractability of entropy}
\label{para:implicit-density-models}
An \emph{implicit density model} is characterized by a data generation process \citep{%
Mohamed2016learning}.
The simplest form of an implicit density model contains a prior random variable $z\sim p(z)$, and a generator function $g:z\mapsto x$.
The likelihood of a particular realization of $x$ is \emph{implied} by the pushforward of $p(z)$ through the mapping $g$. 

Unlike an \emph{explicit density model}, 
an implicit density model does not require a carefully designed parameterization for the density to be explicitly defined, 
allowing it to approximate arbitrary data generation process more easily. %
This comes at a price, though, since the density function of the implicit model cannot be easily computed, 
which makes it hard to approximate its entropy using Monte Carlo methods.

\subsection{Denoising entropy gradient estimator}

\begin{figure}%
\centering
\begin{subfigure}[b]{0.5\textwidth}
    \centering
    \scalebox{0.80}{
    \begin{tikzpicture}
	\begin{pgfonlayer}{nodelayer}
		\node [style=none] (44) at (3.5, 0.325) {$\mathbb{E}_{x} [ - \log p_{\theta}(x) ] =$};
		\node [style=none] (17) at (3.5, -0.325) {\small $\mathbb{E}_{z} \Big[ \log \textrm{det} | \mathbf{J}_{z} g_{\theta}(z) | - \log p(z)\Big]$};
		\node [style=none] (19) at (-3.25, -0.5) {$\theta$};
		\node [style=none] (21) at (1.25, 0.5) {};
		\node [style=none] (22) at (1.25, -0.5) {};
		\node [style=none] (35) at (-2.5, 0.5) {};
		\node [style=none] (36) at (-2.5, -0.5) {};
		\node [style=none] (37) at (-0.6, -0.9) {$\mathbb{E}_{z} [ \nabla_{\theta} \log \textrm{det} | \mathbf{J}_{z} g_{\theta}(z) |]$};
		\node [style=none] (38) at (-0.6, 0.9) {$\log \textrm{det} | \mathbf{J}_{z} g_{\theta}(z) |$};
		\node [style=none] (39) at (-0.6, 1.4) {$x = g_{\theta}(z)$};
		\node [style=none] (41) at (-3.25, 0.5) {$z \sim p(z)$};
		\node [style=none, text=blue] (42) at (-0.75, 0.28) {\small $\tt{forward}$};
		\node [style=none, text=red] (43) at (-0.75, -0.28) {\small $\tt{backward}$};
	\end{pgfonlayer}
	\begin{pgfonlayer}{edgelayer}
		\draw [style=thick blue pointer] (35.center) to (21.center);
		\draw [style=thick red pointer] (22.center) to (36.center);
	\end{pgfonlayer}
\end{tikzpicture}
    }
    \caption{}
\end{subfigure}

\vspace*{-0.2cm}

\hspace*{0.1cm}

\begin{subfigure}[b]{0.5\textwidth}
    \centering
    \scalebox{0.80}{
    \begin{tikzpicture}
	\begin{pgfonlayer}{nodelayer}
		\node [style=none] (19) at (-3.25, -0.5) {$\theta$};
		\node [style=none] (21) at (1.25, 0.5) {};
		\node [style=none] (22) at (1.25, -0.5) {};
		\node [style=none] (35) at (-2.5, 0.5) {};
		\node [style=none] (36) at (-2.5, -0.5) {};
		\node [style=none] (37) at (-0.6, -0.9) {- $\mathbb{E}_{z} [f_{ar}(g_{\theta}(z))^{\intercal} \mathbf{J}_{\theta} g_{\theta}(z) |]$};
		\node [style=none] (38) at (-0.6, 0.9) {$x = g_{\theta}(z)$};
		\node [style=none] (41) at (-3.25, 0.5) {$z \sim p(z)$};
		\node [style=none, text=blue] (42) at (-0.75, 0.28) {\small $\tt{forward}$};
		\node [style=none, text=red] (43) at (-0.75, -0.28) {\small $\tt{backward}$};
		\node [style=none] (45) at (1.5, 0) {$x$};
		\node [style=none] (48) at (2.5, 0) {};
		\node [style=none] (49) at (1.75, 0.25) {};
		\node [style=none] (50) at (1.75, -0.25) {};
		\node [style=none] (51) at (4.15, -0.25) {$f_{ar}(x) \approx \nabla_{x} \log p_{\theta}(x)$};
		\node [style=none] (52) at (4.15, 0.25) {AR-DAE};
	\end{pgfonlayer}
	\begin{pgfonlayer}{edgelayer}
		\draw [style=thick blue pointer] (35.center) to (21.center);
		\draw [style=thick red pointer] (22.center) to (36.center);
		\draw [style=thick line, bend left=60, looseness=1.25] (49.center) to (48.center);
		\draw [style=thick pointer, bend left=75, looseness=1.50] (48.center) to (50.center);
	\end{pgfonlayer}
\end{tikzpicture}
    }
    \caption{}
\end{subfigure}

\vspace*{-0.2cm}

\caption{
(a) Entropy gradient wrt parameters of an invertible generator function. 
(b) Approximate entropy gradient using the proposed method.
}
\label{fig:explicit-vs-implicit}

\vspace*{-0.3cm}

\end{figure}
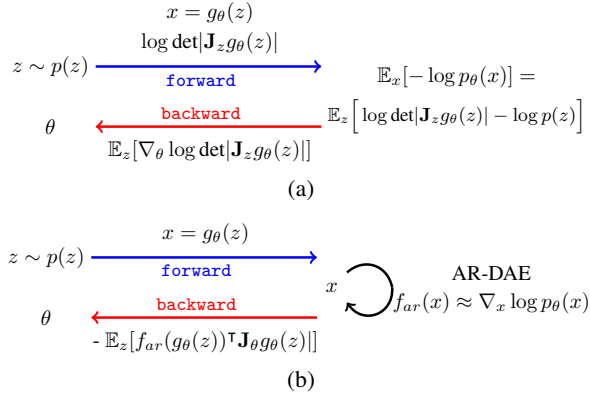

Let $z$ and $g$ be defined as above, and let $\theta$ be the parameters of the mapping $g$ (denoted $g_\theta$). 
Most of the time, we are interested in maximizing (or minimizing) the entropy of the implicit distribution of $x=g_\theta(z)$. 
For example, when the mapping $g$ is a bijection, the density of $x=g_\theta(z)$ can be decomposed using the change-of-variable density formula, so controlling the entropy of $x$ amounts to controlling the log-determinant of the Jacobian of $g_\theta$ \citep{RezendeM15/icml}, as illustrated in Figure~\ref{fig:explicit-vs-implicit}-(a).
This allows us to estimate both the entropy and its gradient. 
However, for an iterative optimization algorithm such as (stochastic) gradient descent, which is commonly employed in machine learning, it is sufficient to compute the gradient of the entropy rather than the entropy itself.  %

Following \citet{roeder2017sticking}, we can rewrite the entropy of $x$ by changing the variable and neglecting the score function which is $0$ in expectation to get
\eq{
\label{eq:entropy-gradient}
\nabla_{\theta} H(p_{g}(x)) = -\E_{z} \left[ [\nabla_x \log p_{g}(x)|_{x=g_{\theta}(z)}]^{\intercal} \mathbf{J}_{\theta}g_{\theta}(z) \right] %
,
}
where $\mathbf{J}_{\theta}g_{\theta}(z)$ is the Jacobian matrix of the random sample $x=g_{\theta}(z)$ wrt to the sampler's parameters $\theta$.
See Appendix \ref{sec:appendix-entropy-gradient} for the detailed derivation.
We emphasize that this formulation is more general as it does not require $g$ to be bijective. 

Equation (\ref{eq:entropy-gradient}) tells us that we can obtain an unbiased estimate of the entropy by drawing a sample of the integrand, which is the \emph{path derivative} of $z$.
The integrand requires evaluating the sample $x=g_\theta(z)$ under the gradient of its log density $\nabla_x \log p_g(x)$. 
As $\log p_g(x)$ is usually intractable or simply not available, we directly approximate its gradient using a black box function. 
As long as we can provide a good enough approximation to the gradient of the log density and treat it as the incoming unit in the backward differentiation (see Figure \ref{fig:explicit-vs-implicit}-(b)), the resulting estimation of the entropy gradient is approximately unbiased. 

In this work, we propose to approximate the gradient of the log density using a denoising autoencoder (DAE, \citet{VincentLBM08/icml}). 
A DAE is trained by minimizing the reconstruction loss $d$ of an autoencoder $r$ with a randomly perturbed input
\begin{align*}
\mathcal{L}_{\tt{DAE}}(r)=\E[d(x, r(x+\epsilon))],
\end{align*}
where the expectation is taken over the random perturbation $\epsilon$ and data $x$. 
\citet{AlainB14/jmlr} showed that if $d$ is the L2 loss and $\epsilon$ is a centered isotropic Gaussian random variable with variance $\sigma^2$, then under some mild regularity condition on $\log p_g$ the optimal reconstruction function satisfies
$$r^*(x)=x+\sigma^2\nabla_x \log p_g(x) + o(\sigma^2),$$
as $\sigma^2\rightarrow0$.
That is, for sufficiently small $\sigma$, we can approximate the gradient of the log density using the black box function $f_r(x):=\frac{r(x)-x}{\sigma^2}$ assuming $r\approx r^*$.

\section{Error analysis of $\nabla_x\log p_g(x)\approx f_r(x)$}
Naively using $f_r(x)$ to estimate the gradient of the entropy is problematic. 
First of all, the division form of $f_r$ can lead to numerical instability and magnify the error of approximation.
This is because when the noise perturbation $\sigma$ is small, $r(x)$ will be very close to $x$ and thus both the numerator and the denominator of $f_r$ are close to zero. 

Second, using the triangle inequality, we can decompose the error of the approximation $\nabla_x\log p_g(x)\approx f_r(x)$ into
\begin{align*}
||\nabla_x &\log p_g(x) - f_r(x)|| \leq \\ &\!\!\qquad\underbrace{||\nabla_x\log p_g(x) - f_{r^*}(x)||}_{\textit{asymp error}} \;\,+\;\, ||f_{r^*}(x) - f_{r}(x) ||.
\end{align*}
The first error is incurred by using the optimal DAE to approximate $\nabla_x\log p_g(x)$, which vanishes when $\sigma\rightarrow 0$.
We refer to it as the \emph{asymptotic error}.
The second term is the difference between the optimal DAE and the ``current'' reconstruction function. 
Since we use a parametric family of functions (denoted by $\mathcal{F}$) to approximate $f_{r^*}$, it can be further bounded by
\begin{align*}
||f_{r^*}&(x) - f_{r}(x) || \leq \\
&\!\!\qquad\qquad\underbrace{||f_{r^*}(x) - f_{r_{\mathcal{F}}^*}(x) ||}_{\textit{param error}} \;\,+\;\, \underbrace{||f_{r_{\mathcal{F}}^*}(x) - f_{r}(x) ||}_{\textit{optim error}},
\end{align*}
where ${r_{\mathcal{F}}^*}:=\argmin_{r\in\mathcal{F}} \cL_{\tt{DAE}}(r)$ is the optimal reconstruction function within the family $\mathcal{F}$. 
The first term measures how closely the family of functions $\mathcal{F}$ approximates the optimal DAE, and is referred to as the \emph{parameterization error}. 
The second term reflects the suboptimality in optimizing $r$. 
It can be significant especially when the distribution of $x$ is non-stationary, in which case $r$ needs to be constantly adapted. 
We refer to this last error term as the \emph{optimization error}.
As we use a neural network to parameterize $r$, the parameterization error can be reduced by increasing the capacity of the network. 
The optimization error is subject to the %
variance of the noise $\sigma^2$ (relative to the distribution of $x$), as it affects the magnitude of the gradient signal $\E[\nabla||r(x+\epsilon)-x||^2]$.
This will make it hard to design a fixed training procedure for $r$ as different values of $\sigma$ requires different optimization specifications to tackle the optimization error.

\section{Achieving asymptotic optimality}
In this section, we propose 
the \textbf{amortized residual DAE} (AR-DAE), 
an improved method to approximate $\nabla_x\log p_g(x)$ that is designed to resolve the numerical instability issue and reduce the error of approximation. 

\subsection{Amortized residual DAE}
\label{subsec:ardae}
AR-DAE  (denoted $f_{ar}$) is a DAE of residual form conditioned on the magnitude of the injected noise, 
minimizing the following optimization objective.
\begin{align}
\label{eq:ardae}
\cL_{\tt{ar}}\lbp f_{ar} \rbp
= \eE_{\substack{
        x \sim p(x) \\
        u \sim N(0, I) \\
        \sigma \sim N(0, \delta^2) \\
        }}
    \lbs \lbV u + \sigma f_{ar}(x + \sigma u; \sigma) \rbV^2 \rbs   
.
\end{align}
This objective involves three modifications to the regular training and parameterization of a DAE: \emph{residual connection}, \emph{loss rescaling}, and \emph{scale conditioning} for amortization. 

\para{Residual form} 
First, 
we consider a residual form of DAE
(up to a scaling factor): 
let
$r(x)=\sigma^2 f_{ar}(x)+x$, 
then
$\nabla_x \log p_g(x)$ is approximately equal to
$$\frac{r(x)-x}{\sigma^2} = \frac{\sigma^2 f_{ar}(x)+x -x}{\sigma^2} = f_{ar} \textrm{ .}$$
That is, this reparameterization allows $f_{ar}$ to directly approximate the gradient, avoiding the division that can cause numerical instability.
The residual form also has an obvious benefit of a higher capacity, as it allows the network to represent an identity mapping more easily, which is especially important when the reconstruction function is close to an identity map for small values of $\sigma$  \cite{HeZRS16/cvpr}.%

\para{Loss rescaling} To prevent the gradient signal from vanishing to $0$ too fast when $\sigma$ is arbitrarily small,
we rescale the loss $\cL_{\tt{DAE}}$ by a factor of $1/\sigma$, and since we can decouple the noise level from the isotropic Gaussian noise into $\epsilon=\sigma u$ for standard Gaussian $u$, the rescaled loss can be written as $\E[||\sigma f_{ar}(x+\sigma u)+u||^2]$.

We summarize the properties of the optimal DAE of the rescaled residual form in the following propositions:
\begin{restatable}{prop}{optimalres}
\label{prop:optimal_res}
Let $x$ and $u$ be distributed by $p(x)$ and $\cN(0,I)$. For $\sigma\neq0$, the minimizer of the functional $\E_{x,u}[||u+\sigma f(x+\sigma u)||^2]$
is almost everywhere determined by
$$f^*(x;\sigma) = \frac{-\eE_u[p(x-\sigma u)u]}{\sigma \eE_u[p(x-\sigma u)]} \textrm{ .}$$
Furthermore, if $p(x)$ and its gradient are both bounded, $f^*$ is continuous wrt $\sigma$ for all $\sigma\in\R\setminus0$ and $\lim_{\sigma\rightarrow0}f^*(x;\sigma) = \nabla_x \log p_g(x)$.
\end{restatable}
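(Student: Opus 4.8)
The plan is to split the statement into three parts: (i) identify the minimizer $f^*(x;\sigma)$, (ii) establish continuity of $f^*$ in $\sigma$ on $\R\setminus\{0\}$, and (iii) compute the limit as $\sigma\to0$.

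For part (i), I would treat the objective as a functional of $f$ and minimize pointwise. Expand the square $\E_{x,u}[\|u+\sigma f(x+\sigma u)\|^2]$ and substitute $y = x+\sigma u$ so that, for fixed $y$, the relevant conditional expectation involves integrating over the values of $u$ consistent with $y$. Concretely, with $x$ having density $p$ and $u$ standard Gaussian, the joint density of $(y,u)$ (after the change of variables $x = y-\sigma u$) is proportional to $p(y-\sigma u)\,\phi(u)$ where $\phi$ is the standard Gaussian density. Since the integrand is convex and quadratic in the vector $f(y)$ for each fixed $y$, I set the gradient in $f(y)$ to zero: $\E[u + \sigma f(y) \mid y] = 0$, which gives $f^*(y;\sigma) = -\tfrac{1}{\sigma}\E[u\mid y]$. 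Writing this conditional expectation as a ratio of integrals against $p(y-\sigma u)\phi(u)$ and absorbing the Gaussian weight into an expectation $\E_u[\cdot]$ over a fresh standard normal $u$ yields exactly $f^*(x;\sigma) = \frac{-\E_u[p(x-\sigma u)u]}{\sigma\,\E_u[p(x-\sigma u)]}$. One should note the ``almost everywhere'' caveat: the formula is only determined where the denominator $\E_u[p(x-\sigma u)]$ is nonzero, which holds for a.e.\ $x$.

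For parts (ii) and (iii), I would rewrite $f^*$ in a more convenient form by the substitution $v = \sigma u$ (valid since $\sigma\neq0$), turning $\E_u[p(x-\sigma u)u]$ into $\tfrac{1}{\sigma}\E_v[p(x-v)v]$ with $v\sim\cN(0,\sigma^2 I)$ — i.e.\ $f^*$ becomes a ratio of Gaussian-smoothed quantities, essentially $\nabla_x(p*\phi_\sigma)(x)\,/\,(p*\phi_\sigma)(x)$ after integration by parts, which is $\nabla_x\log(p*\phi_\sigma)(x)$. Continuity in $\sigma$ on $\R\setminus\{0\}$ then follows from dominated convergence applied to numerator and denominator separately, using the boundedness of $p$ (and of $\nabla p$) to produce integrable dominating functions uniformly for $\sigma$ in a neighborhood of any $\sigma_0\neq0$; the denominator stays bounded away from zero locally, so the ratio is continuous. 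For the limit $\sigma\to0$, the smoothed density $p*\phi_\sigma \to p$ and its gradient $\nabla(p*\phi_\sigma)\to\nabla p$ pointwise (again by dominated convergence with the bounded-gradient hypothesis), so $f^*(x;\sigma)\to \nabla p(x)/p(x) = \nabla_x\log p_g(x)$.

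The main obstacle I anticipate is making the interchange-of-limit and integration-by-parts steps rigorous under only the stated hypotheses (``$p(x)$ and its gradient are both bounded''): one needs enough decay or regularity to justify differentiating under the integral sign and to dominate the integrands, and to ensure the denominator does not vanish in the limit. I expect the cleanest route is to work with the convolution form $p*\phi_\sigma$, where smoothness is automatic from the Gaussian kernel, derive $f^* = \nabla_x\log(p*\phi_\sigma)$ cleanly, and then invoke standard mollifier convergence; the boundedness assumptions are exactly what is needed for the dominated convergence arguments. Part (i)'s pointwise convex minimization is routine; parts (ii)–(iii) are where the analytic care goes.
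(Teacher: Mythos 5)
Your proposal is correct and reaches the same conclusion, but it takes a genuinely different route for the $\sigma\to0$ limit and supplies details the paper leaves to a citation. For the minimizer you carry out the pointwise convex minimization of the conditional square loss (essentially reproducing the Alain--Bengio argument), whereas the paper simply invokes \citet{AlainB14/jmlr} for that step. For continuity the two arguments coincide: dominated convergence on numerator and denominator separately, using boundedness of $p$ to dominate; the paper additionally notes the symmetry $f^*(x;\sigma)=f^*(x;-\sigma)$ to restrict attention to $\sigma>0$, but that is cosmetic. The real divergence is in the limit. The paper notes the expression is $0/0$ as $\sigma\to0$, applies L'H\^opital's rule in $\sigma$, passes the $\sigma$-derivative under the integral by dominated convergence (using boundedness of $\nabla p$), and lands on $\nabla p(x)/p(x)$ after using $\E[u_iu_j]=\delta_{ij}$. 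You instead rewrite $f^*$ via the substitution $v=\sigma u$ and an integration by parts against the Gaussian kernel to obtain the closed form $f^*(x;\sigma)=\nabla_x\log\bigl(p*\phi_{|\sigma|}\bigr)(x)$ — the score of the Gaussian-smoothed density, i.e.\ the denoising-score-matching identity of Vincent (2011) — and then invoke standard mollifier convergence $p*\phi_\sigma\to p$, $\nabla(p*\phi_\sigma)=(\nabla p)*\phi_\sigma\to\nabla p$. Your route avoids L'H\^opital, is conceptually cleaner, and makes the connection to the perturbed-density score explicit, which also immediately re-derives continuity on $\sigma\neq0$ since $p*\phi_\sigma$ is smooth and strictly positive. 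Both arguments implicitly need continuity of $p$ and $\nabla p$ at $x$ for the final pointwise evaluation (which the stated hypothesis "bounded" does not literally give), so you are on the same footing as the paper there. One small point worth flagging in a write-up: your integration by parts has no boundary terms because $\phi_\sigma$ decays rapidly while $p$ is merely bounded; that is where the boundedness assumption earns its keep in your version of the argument.
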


The above proposition studies the asymptotic behaviour of the optimal $f^*_{ar}$ as $\sigma\rightarrow0$. 
Below, we show that under the same condition, $f^*_{ar}$ approaches the gradient of the log density function of a Gaussian distribution centered at the expected value of $x\sim p(x)$ as $\sigma$ is arbitrarily large. 
\begin{restatable}{prop}{reslargesigma}
\label{prop:res_large_sigma}
$\lim_{\sigma\rightarrow\infty}\frac{f^*(x;\sigma)}{\nabla_x \log\cN(x;\E_p[X],\sigma^2I)} \rightarrow1$.
\end{restatable}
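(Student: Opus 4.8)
The plan is to turn the stated limit into an elementary computation by starting from the closed form of the minimizer in Proposition~\ref{prop:optimal_res}. First I would perform the change of variables $y = x-\sigma u$ in both the numerator $\E_u[p(x-\sigma u)u]$ and the denominator $\E_u[p(x-\sigma u)]$. Since $u\sim\cN(0,I)$, each of these becomes an integral of $p(y)$ against $e^{-\|x-y\|^2/(2\sigma^2)}$, and the common constants $(2\pi)^{-d/2}\sigma^{-d}$ cancel in the ratio defining $f^*$. This yields
\[
f^*(x;\sigma) = -\frac{1}{\sigma^2}\bigl(x - m_\sigma(x)\bigr),
\qquad
m_\sigma(x) := \frac{\int p(y)\,y\,e^{-\|x-y\|^2/(2\sigma^2)}\,dy}{\int p(y)\,e^{-\|x-y\|^2/(2\sigma^2)}\,dy}.
\]
(Equivalently $f^* = \nabla_x\log p_\sigma$ with $p_\sigma = p * \cN(0,\sigma^2 I)$, which one could instead obtain from Stein's identity; the change of variables has the advantage of not requiring differentiability of $p$.) In parallel I would record the exact score of the reference Gaussian, $\nabla_x\log\cN(x;\E_p[X],\sigma^2 I) = -\tfrac{1}{\sigma^2}(x-\E_p[X])$, so that the quantity in the statement equals, coordinatewise, $(x-m_\sigma(x))/(x-\E_p[X])$.

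The core of the argument is then to show $m_\sigma(x)\to\E_p[X]$ as $\sigma\to\infty$. For fixed $x$ and $y$ we have $e^{-\|x-y\|^2/(2\sigma^2)}\uparrow 1$, and the integrands are dominated uniformly in $\sigma$: $0\le p(y)e^{-\|x-y\|^2/(2\sigma^2)}\le p(y)\in L^1$, and $0\le \|y\|\,p(y)\,e^{-\|x-y\|^2/(2\sigma^2)}\le \|y\|\,p(y)\in L^1$, the latter being precisely the finite-mean hypothesis implicit in the statement (and boundedness of $p$ from Proposition~\ref{prop:optimal_res} is more than enough for the first bound). Dominated convergence then gives $\int p(y)\,e^{-\|x-y\|^2/(2\sigma^2)}dy\to 1$ and $\int p(y)\,y\,e^{-\|x-y\|^2/(2\sigma^2)}dy\to\E_p[X]$, hence $m_\sigma(x)\to\E_p[X]$. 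Substituting back, the coordinatewise ratio tends to $(x-\E_p[X])/(x-\E_p[X]) = 1$ wherever $x\neq\E_p[X]$; equivalently, $f^*(x;\sigma) - \nabla_x\log\cN(x;\E_p[X],\sigma^2 I) = \tfrac{1}{\sigma^2}\bigl(m_\sigma(x)-\E_p[X]\bigr)$, so the relative error $\|f^*(x;\sigma)-\nabla_x\log\cN(x;\E_p[X],\sigma^2 I)\|\,/\,\|\nabla_x\log\cN(x;\E_p[X],\sigma^2 I)\| = \|m_\sigma(x)-\E_p[X]\|/\|x-\E_p[X]\| \to 0$, which is the precise content of the claimed limit.

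The change of variables and the Gaussian-score computation are routine; the only genuine analytic step is the dominated-convergence passage, and even that is standard once the mean of $p$ is assumed finite, so I do not expect a serious obstacle. The one place needing care is interpreting the ``ratio'' of two vectors in the statement: I would make explicit that it is meant coordinatewise at coordinates where the denominator is nonzero (or phrase the conclusion as vanishing relative error, as above), and note the harmless degenerate case $x=\E_p[X]$, where both sides are $o(1/\sigma^2)$ and the claim is vacuous.
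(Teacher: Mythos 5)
Your proof is correct and follows essentially the same route as the paper: both change variables (you use $y=x-\sigma u$, the paper uses $\epsilon=\sigma u$; these are related by the shift $\epsilon=x-y$), rewrite $f^*$ in posterior-mean form $-\tfrac{1}{\sigma^2}\bigl(x-m_\sigma(x)\bigr)$, and apply dominated convergence to get $m_\sigma(x)\to\E_p[X]$. Your write-up is in fact a little tighter than the paper's: you make explicit the finite-first-moment hypothesis that DOM on the numerator requires and the coordinatewise reading of the vector ratio, and you avoid two compensating sign slips in the paper's intermediate displays (the rewriting of $f^*$ as $\tfrac{1}{\sigma^2}\E_q[\epsilon]$ should carry a minus sign, and correspondingly $\int p(x-\epsilon)\,\epsilon\,d\epsilon$ equals $x-\E_p[X]$, not $\E_p[X]-x$).
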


\begin{figure}
\centering
\includegraphics[width=0.23\textwidth]{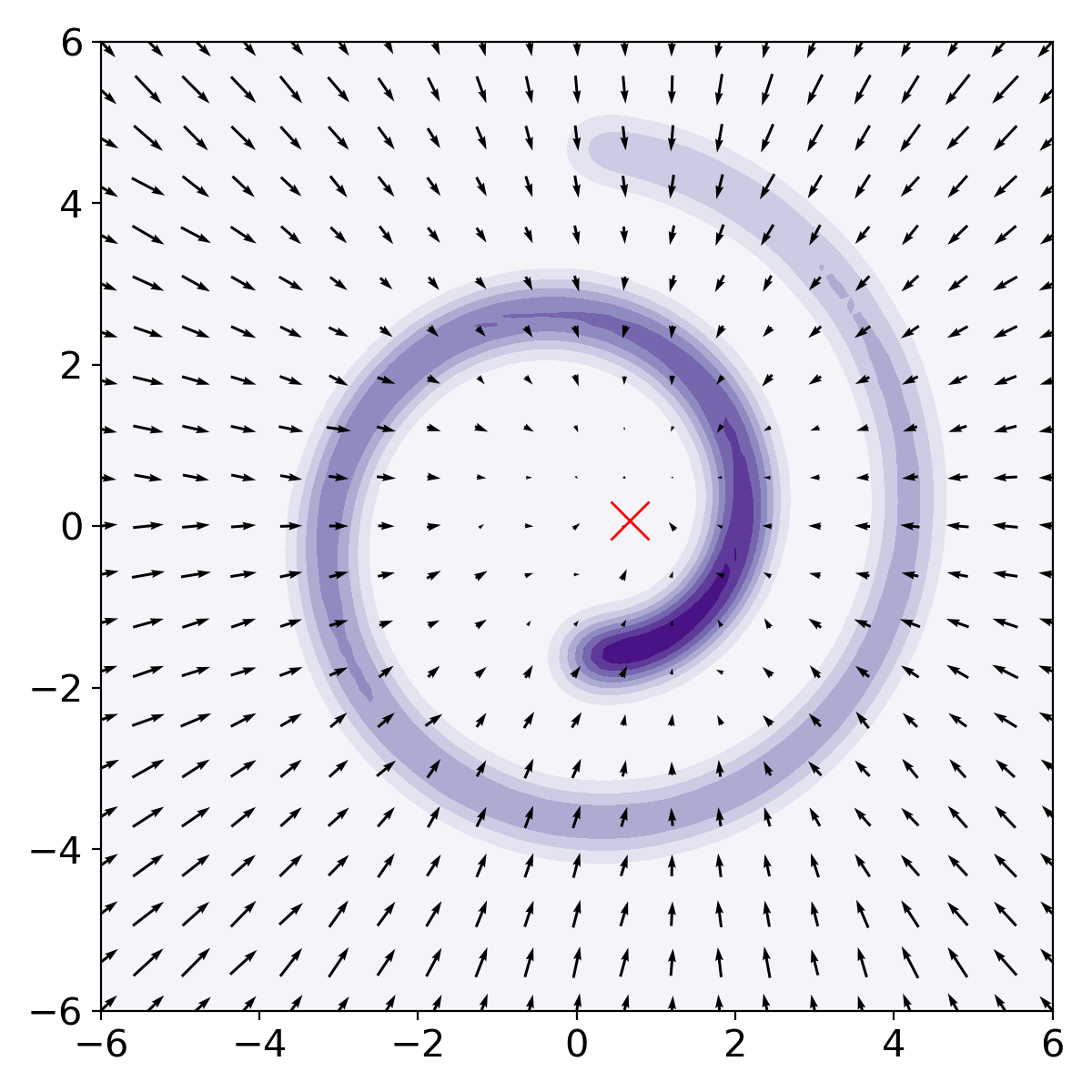}
\hfill
\includegraphics[width=0.23\textwidth]{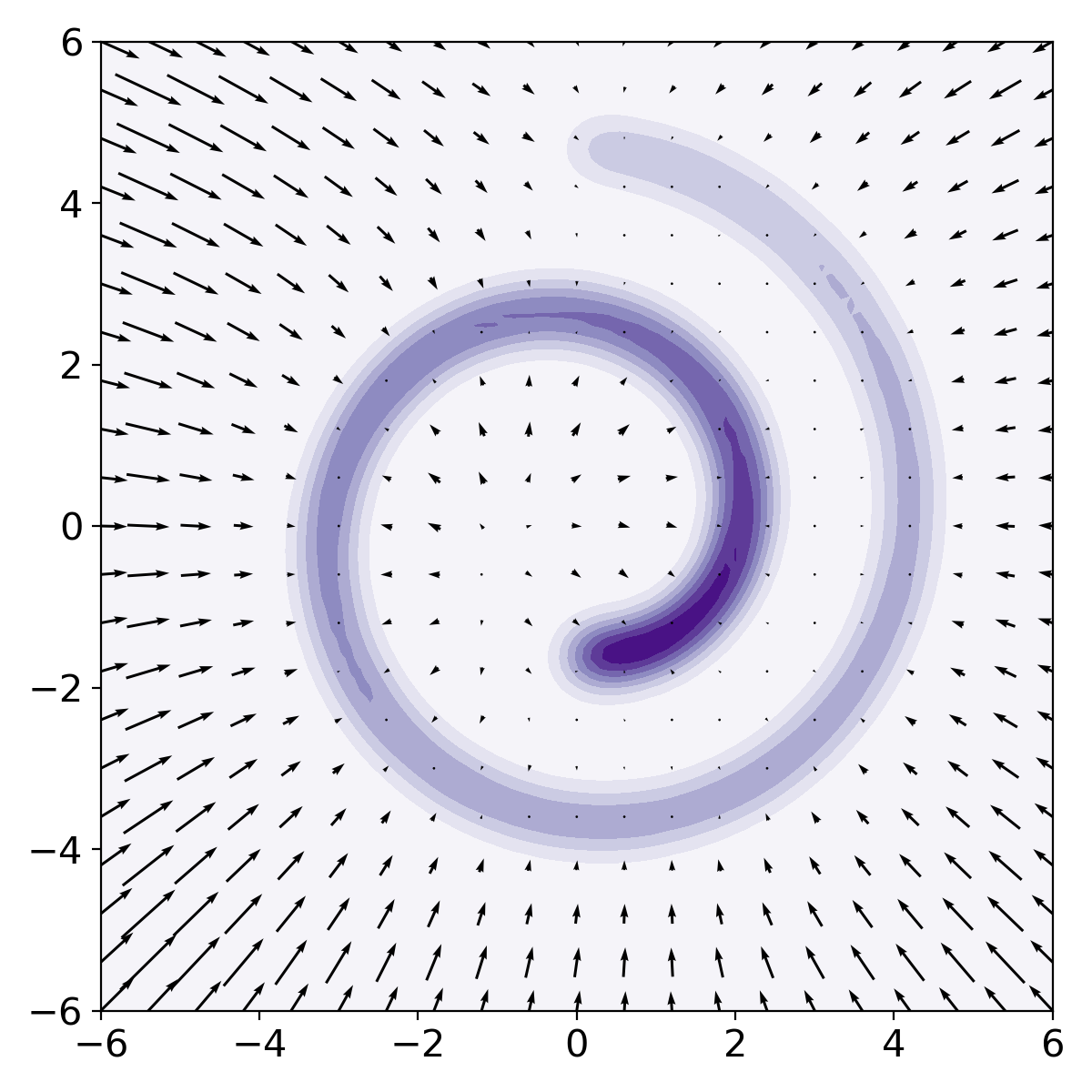}
\caption{Residual DAE trained with a large (left) vs small (right) $\sigma$ value. Red cross indicates the mean of the swissroll. 
The arrows indicate the approximate gradient directions.}
\label{fig:quivers}

\vspace*{-0.3cm}

\end{figure}

\para{Scale conditioning} Intuitively, with larger $\sigma$ values, the perturbed data $x+\sigma u$ will more likely be ``off-manifold'', which makes it easy for the reconstruction function to point back to where most of the probability mass of the distribution of $x$ resides.
Indeed, as Proposition~\ref{prop:res_large_sigma} predicts, with larger $\sigma$ the optimal $f_{ar}^*$ tends to point to the expected value $\E_p[X]$, which is shown in Figure \ref{fig:quivers}-left. 
With smaller values of $\sigma$, training $f_{ar}$ becomes harder, as one has to predict the vector $-u$ from $x+\sigma u$ (i.e. treating $x$ as noise and trying to recover $u$).
Formally, the training signal ($\Delta$) has a decaying rate of $\mathcal{O}(\sigma^2)$ for small $\sigma$ values, because
\begin{align*}%
\E_u[\Delta] &:= \E_u[\nabla||u+\sigma f(x+\sigma u)||^2] \nonumber \\&= 2\sigma^2 \nabla \left(\tr(\nabla_x f(x)) + \frac{1}{2}||f(x)||^2 \right) + o(\sigma^2),
\end{align*}%
where the first term is proportional to the stochastic gradient of the \emph{implicit score matching} \citep{hyvarinen2005estimation}.
That is, with smaller $\sigma$ values, minimizing the rescaled loss is equivalent to score matching, up to a diminishing scaling factor. 
Moreover, the variance of the gradient signal $\Var(\Delta)$ also has a quadratic rate $\mathcal{O}(\sigma^2)$, giving rise to a decreasing signal-to-noise ratio (SNR) $\E[\Delta]/\sqrt{\Var(\Delta)}=\mathcal{O}(\sigma)$, which is an obstacle for stochastic optimization \citep{shalev2017failures}.
See Appendix \ref{app:snr} for the SNR analysis.

In order to leverage the asymptotic optimality of the gradient approximation as $\sigma\rightarrow0$ (Figure \ref{fig:quivers}-right),
we propose to train multiple (essentially infinitely many) models with different $\sigma$'s at the same time, hoping to leverage the benefit of training a large-$\sigma$ model while training a model with a smaller $\sigma$. 

More concretely, we condition $f_{ar}$ on the scaling factor $\sigma$, so that $f_{ar}$ can "generalize" to the limiting behaviour of $f^*_{ar}$ as $\sigma\rightarrow0$ to reduce the asymptotic error.
Note that we cannot simply take $\sigma$ to be zero, since setting $\sigma=0$ would result in either learning an identity function for a regular DAE or learning an arbitrary function for the rescaled residual DAE (as the square loss would be independent of the gradient approximator). 

The scale-conditional gradient approximator $f_{ar}(x;\sigma)$ will be used to approximate $\nabla_x \log p_g(x)$ by setting $\sigma=0$ during inference, while $\sigma$ is never zero at training. 
This can be done by considering a distribution of $\sigma$, which places zero probability to the event $\{\sigma=0\}$; e.g. a uniform density between $[0,\delta]$ for some $\delta>0$. 
The issue of having a non-negative support for the distribution of $\sigma$ is that we need to rely on $f_{ar}$ to extrapolate to $0$, but neural networks usually perform poorly at extrapolation. 
This can be resolved by having a symmetric distribution such as centered Gaussian with variance $\delta^2$ or uniform density between $[-\delta,\delta]$; owing to the the symmetry of the noise distribution $N(u;0,I)$, we can mirror the scale across zero without changing the loss:
\begingroup\makeatletter\def\f@size{9}\check@mathfonts
\def\maketag@@@#1{\hbox{\m@th\large\normalfont#1}}%
\begin{align*}
\E_u
\lbs \lbV u + \sigma f(x+\sigma u) \rbV^2 \rbs \nn
&= \eE \lbs \lbV (-u) + \sigma f(x+\sigma (-u)) \rbV^2 \rbs \nn \\
&= \eE \lbs \lbV u + (-\sigma) f(x+(-\sigma)u) \rbV^2 \rbs \nn.%
\end{align*}\endgroup
Furthermore, Proposition \ref{prop:optimal_res} implies a good approximation to $f_{ar}^*(x,\sigma')$ would be close to $f_{ar}^*(x,\sigma)$ if $\sigma'$ is sufficiently close to $\sigma$. 
We suspect this might help to reduce the optimization error of AR-DAE, since the continuity of both $f_{ar}$ and $f_{ar}^*$ implies that
$f_{ar}(x,\sigma)$ only needs to refine $f_{ar}(x,\sigma')$ slightly if the latter already approximates the curvature of $f_{ar}^*(x,\sigma')$ well enough. 
Then by varying different $\sigma$ values, the conditional DAE is essentially interpolating between the gradient field of the log density function of interest and that of a Gaussian with the same expected value. 

\subsection{Approximation error}
\label{sec:approx_err_ardae}
\begin{figure}
\centering
\includegraphics[width=0.4\textwidth]{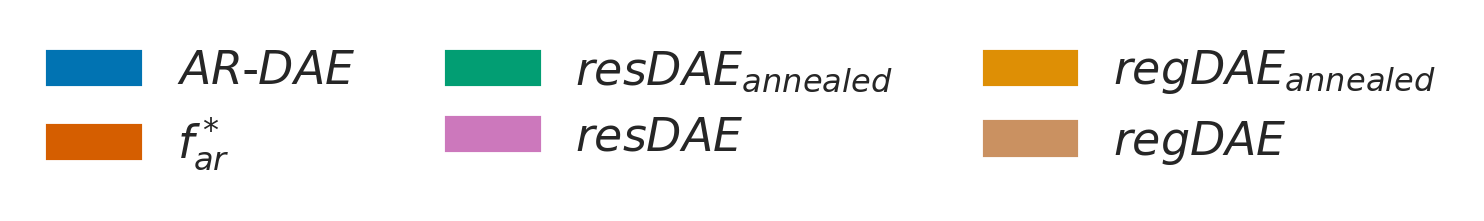}

\vspace*{-0.2cm}

\includegraphics[width=0.4\textwidth]{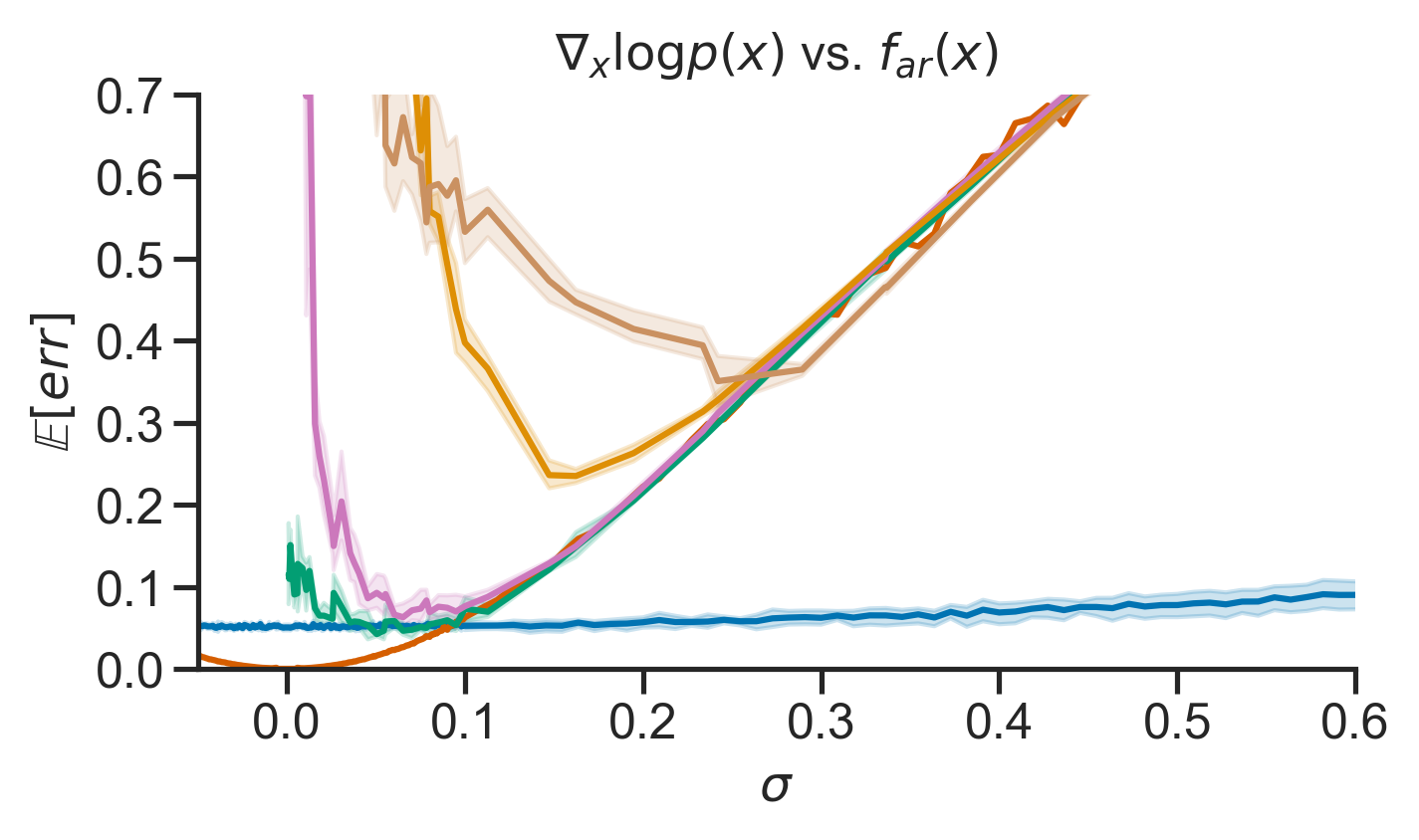}

\vspace*{-0.4cm}

\caption{
Approximating log density gradient of 1-D MoG.
\emph{AR-DAE}: the approximation error of the proposed method.
$f^*_{ar}$: the optimal DAE. %
\emph{resDAE}: a DAE of residual form (as well as loss rescaling).
\emph{regDAE}: a regular DAE. 
}
\label{fig:dae-1d-err}

\vspace*{-0.3cm}

\end{figure}

To study the approximation error with different variants of the proposed method, we consider a 1-dimensional mixture of Gaussians (MoG) with two equally weighted Gaussians centered at 2 and -2, and with a standard deviation of 0.5, as this simple distribution has a non-linear gradient function and an analytical form of the optimal gradient approximator $f^*$. 
See Appendix \ref{app:error_analysis} for the formula and an illustration of approximation with $f^*$ with different $\sigma$ values. 

We let $p$ be the density function of the MoG just described. 
For a given gradient approximator $f$, we estimate the expected error 
$\E_p[|\nabla_x \log p(x) - f|]$ using 1000 i.i.d. samples of $x\sim p$.
The results are presented in Figure \ref{fig:dae-1d-err}.
The curve of the expected error of the optimal $f_{ar}^*$ shows the asymptotic error indeed shrinks to $0$ as $\sigma\rightarrow0$, and it serves as a theoretical lower bound on the overall approximation error. 

Our ablation includes two steps of increments.
First, modifying the regular DAE (\emph{regDAE}) to be of the residual form (with loss rescaling, \emph{resDAE}) largely reduces the parameterization error and optimization error combined, as we use the same architecture for the reconstruction function of \emph{regDAE} and for the residual function of \emph{resDAE}. 
We also experiment with annealing the $\sigma$ values (as opposed to training each model individually): we take the model trained with a larger $\sigma$ to initialize the network that will be trained with a slightly smaller $\sigma$. 
Annealing significantly reduces the error and thus validates the continuity of the optimal $f_{ar}^*$. 
All four curves have a jump in the error when $\sigma$ gets sufficiently small, indicating the difficulty of optimization when the training signal diminishes.
This leads us to our second increment: amortization of training (i.e. AR-DAE). 
We see that not only does the error of AR-DAE decrease and transition more smoothly as $\sigma$ gets closer to $0$, but it also significantly outperforms the optimal $f_{ar}^*$ for large $\sigma$'s. 
We hypothesize this is due to the choice of the distribution over $\sigma$; $\cN(0,\delta^2)$ concentrates around $0$, which biases the training of $f_{ar}$ to focus more on smaller values of $\sigma$.

\section{Related Works}
Denoising autoencoders were originally introduced to learn useful representations for deep networks by \citet{VincentLBM08/icml, vincent2010stacked}. 
It was later on noticed by \citet{vincent2011connection} that the loss function of the residual form of DAE is equal to the expected quadratic error $||f-\nabla_x\log p_\sigma||^2$, where $p_\sigma(x') = \int p(x)\gN(x';x,\sigma^2 I)dx$ is the marginal distribution of the perturbed data, to which the author refers as \emph{denoising score matching}.
Minimizing expected quadratic error of this form is in general known as \emph{score matching} \citep{hyvarinen2005estimation}, where $\nabla_x \log p$ is referred to as the score \footnote{This is not to be confused with the score (or informant) in statistics, which is the gradient of the log likelihood function wrt the parameters.} of the density $p$. 
And it is clear now when we convolve the data distribution with a smaller amount of noise, the residual function $f$ tends to approximate $\nabla_x\log p(x)$ better. 
This is formalized by \citet{AlainB14/jmlr} as the limiting case of the optimal DAE. 
\citet{saremi2018deep, saremi2019neural} propose to use the residual and gradient parameterizations to train a deep energy model with denoising score matching. 

As a reformulation of score matching, instead of explicitly minimizing the expected square error of the score, the original work of \citet{hyvarinen2005estimation} proposes the \emph{Implicit score matching} and minimizes~
\begin{align}
\E_p\left[\frac{1}{2}||f(x)||^2 + \tr(\nabla_x f(x))\right]
.
\label{eq:ism}
\end{align}
\citet{SongGSE19} proposed a stochastic algorithm called the \emph{sliced score matching} to estimate the trace of the Jacobian, which reduces the computational cost from $\mathcal{O}(d_x^2)$ to $\mathcal{O}(d_x)$ (where $d_x$ is the dimensionality of $x$).
It was later noted by the same author that the computational cost of the sliced score matching is still much higher than that of the denoising score matching \citep{song2019generative}.

Most similar to our work are \citet{song2019generative} and \citet{bigdeli2020learning}.
\citet{song2019generative} propose to learn the score function of a data distribution, and propose to sample from the corresponding distribution of the learned score function using Langevin dynamics. 
They also propose a conditional DAE trained with a sequence of $\sigma$'s in decreasing order, and anneal the potential energy for the Langevin dynamics accordingly to tackle the mixing problem of the Markov chain. 
\citet{bigdeli2020learning} propose to match the score function of the data distribution and that of an implicit sampler.
As the resulting algorithm amounts to minimizing the reverse KL divergence, their proposal can be seen as a combination of \citet{song2019generative} and our work. 

Implicit density models are commonly seen in the context of likelihood-free inference \citep{MeschederNG17/icml, Tran2017hierarchical, li2017approximate, Huszar2019implicit}. 
Statistics of an implicit distribution are usually intractable, but there has been an increasing interest in approximately estimating the gradient of the statistics, such as the entropy \citep{LiT18/iclr,shi2018spectral} and the mutual information \citep{wen2020mutual}. 

\begin{figure}%
\centering
\begin{subfigure}[b]{0.09\textwidth}
    \parbox{0.1\textwidth}{\subcaption*{\textbf{1}}}%
    \hspace{0.5em}%
    \parbox{0.1\textwidth}{\includegraphics[width=10\linewidth]{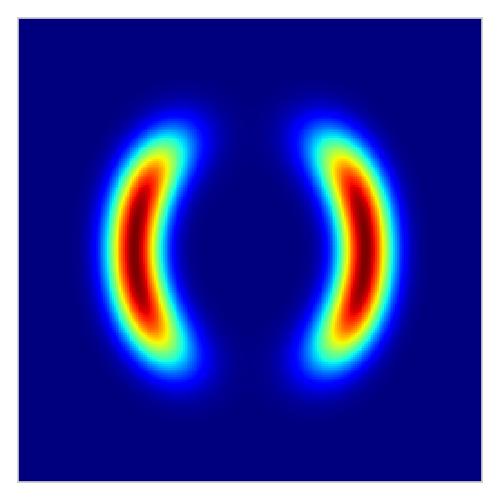}}
    
    \parbox{0.1\textwidth}{\subcaption*{\textbf{2}}}%
    \hspace{0.5em}%
    \parbox{0.1\textwidth}{
    \includegraphics[width=10\linewidth]{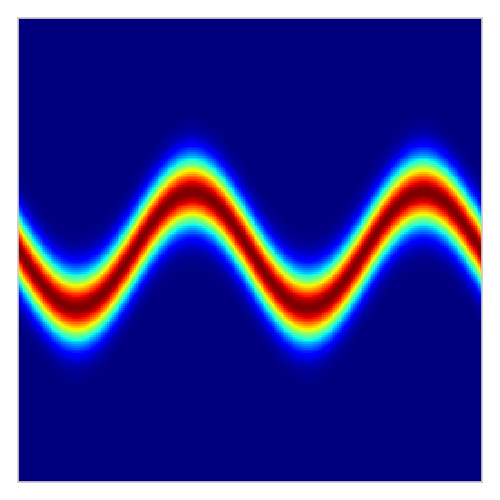}
    }
    
    \parbox{0.1\textwidth}{\subcaption*{\textbf{3}}}%
    \hspace{0.5em}%
    \parbox{0.1\textwidth}{
    \includegraphics[width=10\linewidth]{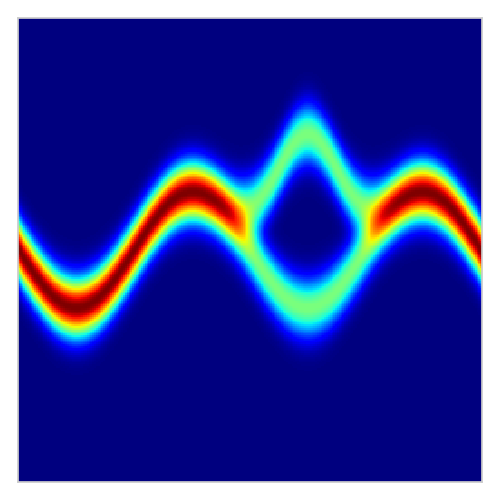}
    }
    
    \parbox{0.1\textwidth}{\subcaption*{\textbf{4}}}%
    \hspace{0.5em}%
    \parbox{0.1\textwidth}{
    \includegraphics[width=10\linewidth]{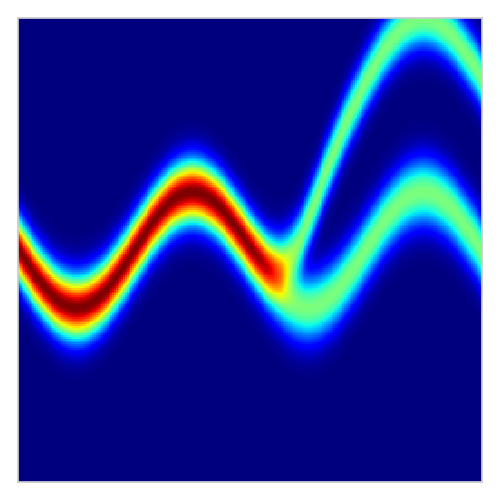}
    }

    \vspace*{-0.05cm}
    
    \caption*{\small $\quad\,\, \frac{1}{Z}e^{-U(x)}$}
    
    \vspace*{-0.25cm}
    
    \caption*{\scriptsize }
    
\end{subfigure}
\hspace{1em}%
\begin{subfigure}[b]{0.09\textwidth}
    \includegraphics[width=\linewidth]{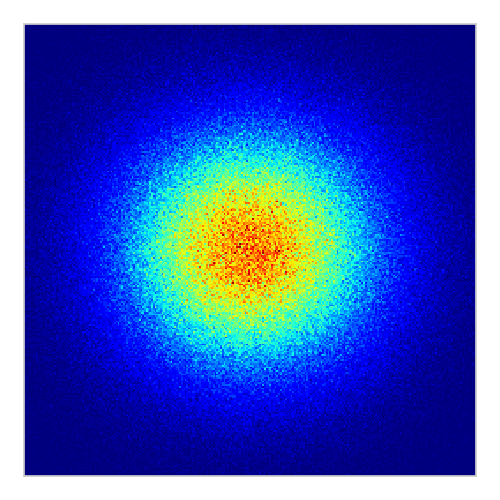}
    \includegraphics[width=\linewidth]{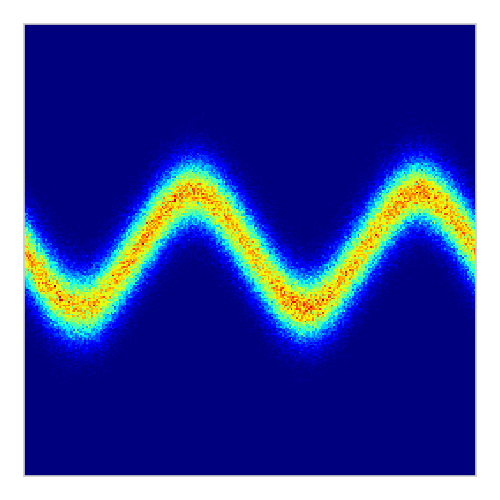}
    \includegraphics[width=\linewidth]{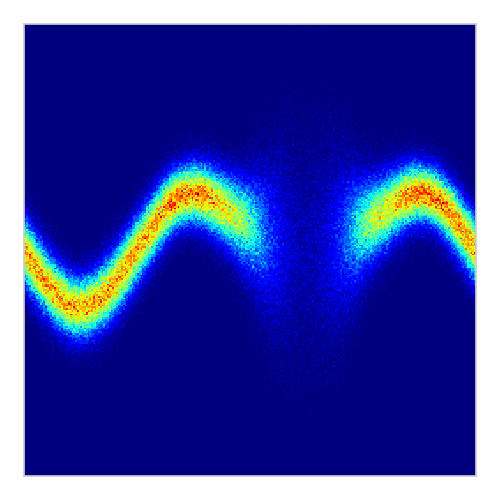}
    \includegraphics[width=\linewidth]{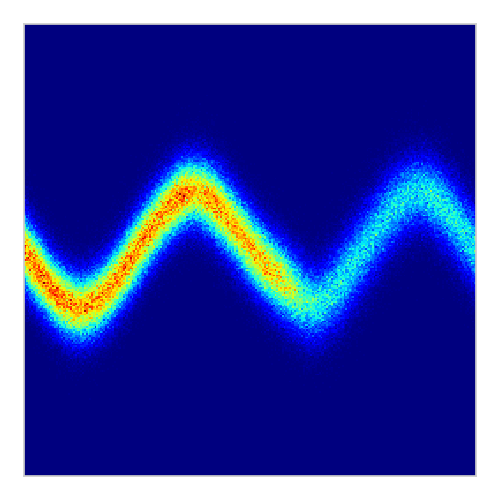}

    \vspace*{-0.05cm}
    
    \caption*{\small Aux}
    
    \vspace*{-0.2cm}
    
    \caption*{\scriptsize hierarchical}
    
\end{subfigure}
\begin{subfigure}[b]{0.09\textwidth}
    \includegraphics[width=\linewidth]{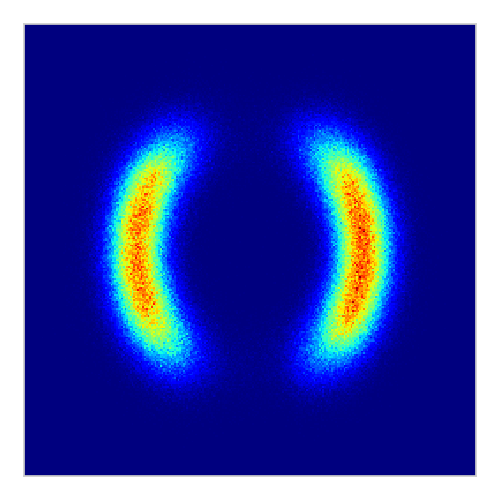}
    \includegraphics[width=\linewidth]{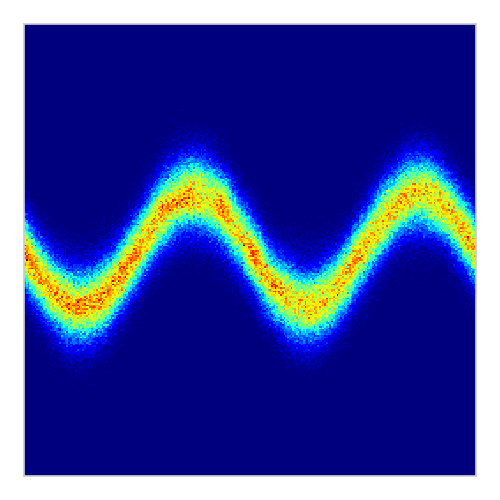}
    \includegraphics[width=\linewidth]{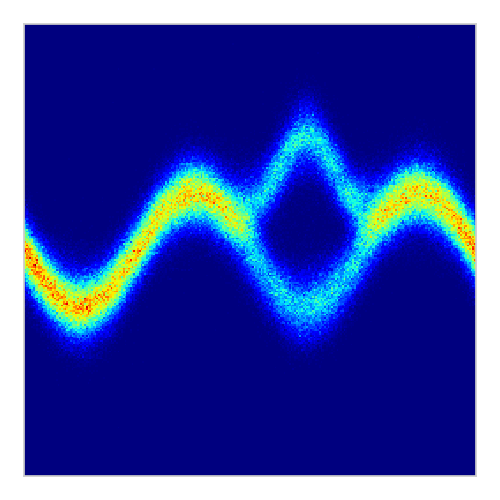}
    \includegraphics[width=\linewidth]{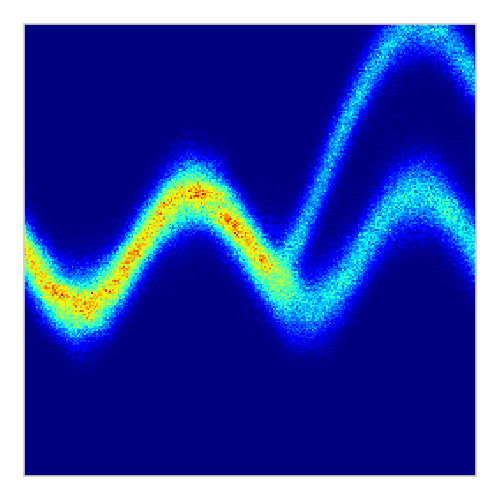}

    \vspace*{-0.05cm}
    
    \caption*{\small AR-DAE}
    
    \vspace*{-0.2cm}
    
    \caption*{\scriptsize hierarchical}
    
\end{subfigure}
\begin{subfigure}[b]{0.09\textwidth}
    \includegraphics[width=\linewidth]{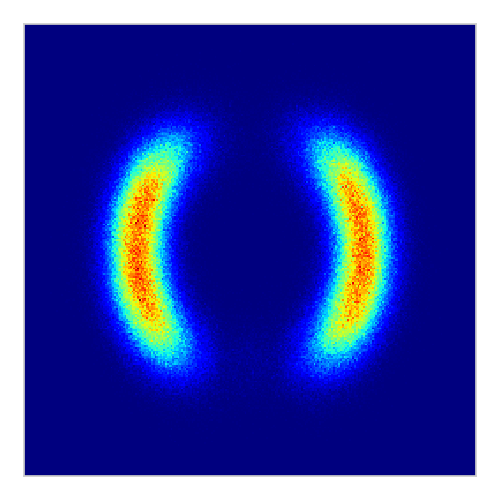}
    \includegraphics[width=\linewidth]{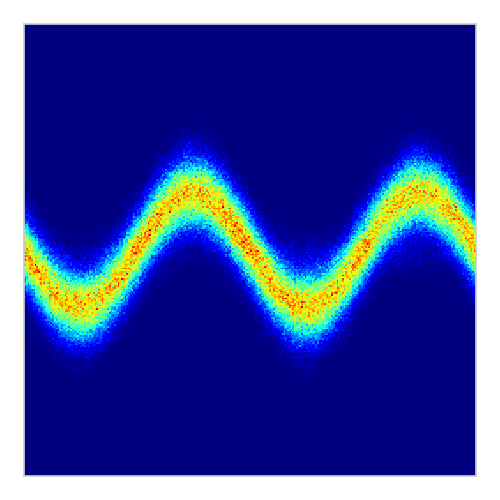}
    \includegraphics[width=\linewidth]{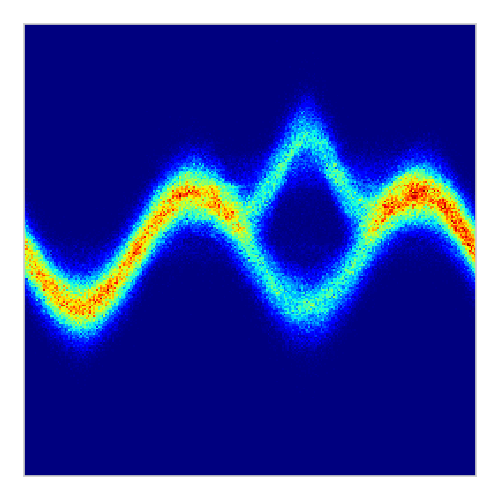}
    \includegraphics[width=\linewidth]{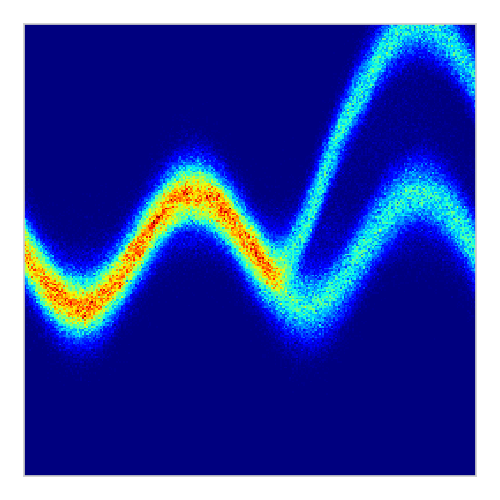}
    
    \vspace*{-0.05cm}

    \caption*{\small AR-DAE}
    
    \vspace*{-0.2cm}
    
    \caption*{\scriptsize implicit}
    
\end{subfigure}

\vspace*{-0.3cm}

\caption{
Fitting energy functions. 
First column: target energy functions. 
Second column: auxiliary variational method for hierarchical model.
Third column: hierarchical model trained with AR-DAE. 
Last column: implicit model trained with  AR-DAE. 
}
\label{fig:result-unnorm-density-estimation}

\vspace*{-0.5cm}

\end{figure}

\section{More Analyses and Experiments}

\begin{figure*}
\centering
\includegraphics[width=1.0\textwidth]{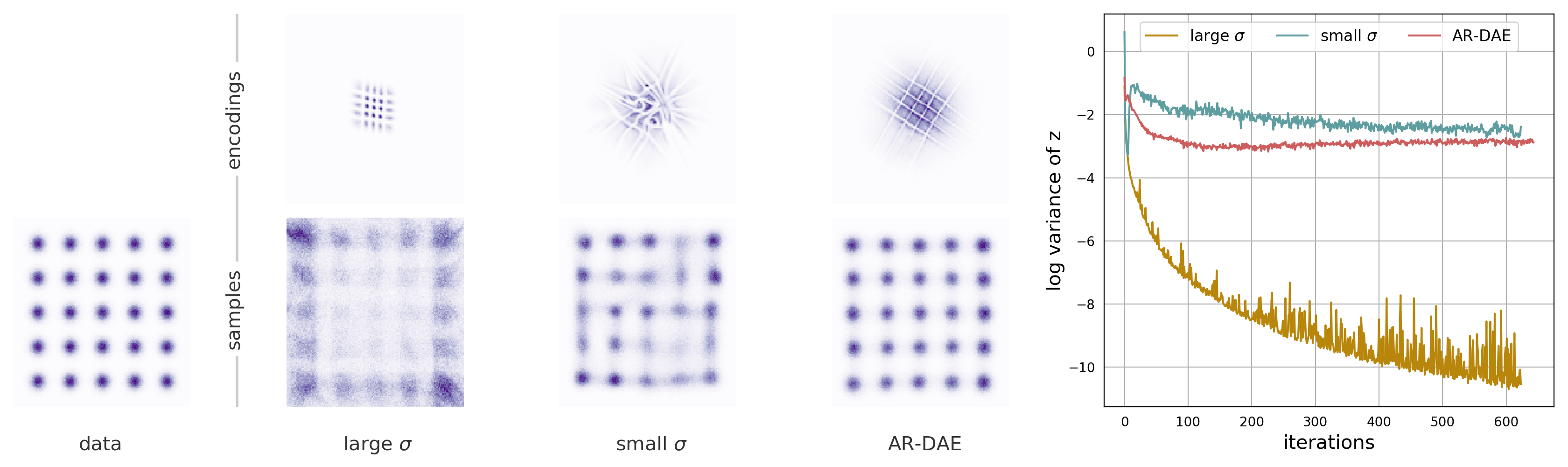}
\vspace{-0.5cm}
\caption{
Density estimation with VAE on $5 \times 5$ grid MoG. 1st column: data sampled from the MoG. 2nd column: 
VAE+DAE in residual form trained with a large $\sigma$ value. 
3rd column: VAE+DAE in residual form trained with a small $\sigma$ value. %
4th column: VAE+AR-DAE. %
Last column: averaged log variance of $z$ throughout training.
}
\label{fig:dae_mog_vae}

\vspace*{-0.3cm}

\end{figure*}

\subsection{Energy function fitting}
\label{sec:energy-function-fitting}
In Section \ref{sec:approx_err_ardae}, we have analyzed the error of approximating the gradient of the log-density function in the context of a fixed distribution. 
In reality, we usually optimize the distribution iteratively, and once the distribution is updated, the gradient approximator also needs to be updated accordingly to constantly provide accurate gradient signal. 
In this section, we use the proposed entropy gradient estimator to train an implicit sampler to match the density of some unnormalized energy functions. 

Concretely,
we would like to approximately sample from a target density which can be written as $p_{\tt{target}}(x) \propto \exp(-U(x))$, where $U(x)$ is an energy function. 
We train a neural sampler $g$ by minimizing the reverse Kullback-Leibler (KL) divergence
\eq{
\label{eq:reverse-kl}
&\kld(p_g(x) || p_{\tt{target}}(x)) \nn\\
&= - H(p_g(x)) + \eE_{x \sim p_g(x)} \lbs -\log p_{\tt{target}}(x) \rbs
.
}
where $p_g$ is the density induced by $g$. 
We use the target energy functions $U$ proposed in \citet{RezendeM15/icml} (see Table \ref{tab:target-energy-functions} for the formulas). 
The corresponding density functions are illustrated at the first column of Figure \ref{fig:result-unnorm-density-estimation}.

We consider two sampling procedures for $g$. 
The first one has a hierarchical structure: let $z$ be distributed by 
$\cN(0,I)$, and $x$ be sampled from the conditional $p_g(x|z):=\cN(\mu_g(z),\sigma_g^2(z))$ where $\mu_g$ and $\log\sigma_g$ are parameterized by neural networks. 
The resulting marginal density has the form $p_g(x) = \int p_g(x|z) p_g(z) dz$, which is computationally intractable due to the marginalization over $z$. 
We compare against the variational method proposed by \citet{AgakovB04a/iconip}, which lower-bounds the entropy by
\eq{
\label{eq:ent-lower-bound-aux}
H(p_g(x)) \ge - \eE_{x,z \sim p_g(x|z) p(z)} \lbs \log \f{ p_g(x|z) p(z) }{ h(z|x) } \rbs %
.%
}
Plugging (\ref{eq:ent-lower-bound-aux}) into (\ref{eq:reverse-kl}) gives us an upper bound on the KL divergence. 
We train $p_g$ and $h$ jointly by minimizing this upper bound \footnote{The normalizing constant of the target density will not affect the gradient $\nabla_x\log p_{\tt{target}}(x) =- \nabla_x U(x)$.} as a baseline.

The second sampling procedure has an implicit density: we first sample $z\sim \cN(0,I)$ and pass it through the generator $x=g(z)$.
We estimate the gradient of the negentropy of both the hierarchical and implicit models by following the approximate gradient of the log density $f_{ar}\approx\log p_g$.
The experimental details can be found in Appendix \ref{appendix:energy-fitting}.

As shown in Figure \ref{fig:result-unnorm-density-estimation}, the density learned by the auxiliary method sometimes fails to fully capture the target density. 
As in this experiment, we anneal the weighting of the cross-entropy term from $0.01$ to $1$, which is supposed to bias the sampler to be rich in noise during the early stage of training, the well-known mode seeking behavior of reverse KL-minimization should be largely mitigated.
This suggests the imperfection of the density trained with the auxiliary method is a result of the looseness of the variational lower bound on entropy, which leads to an inaccurate estimate of the gradient. 
On the other hand, the same hierarchical model and the implicit model trained with AR-DAE both exhibit much higher fidelity. 
This suggests our method can provide accurate gradient signal even when the sampler's distribution $p_g$ is being constantly updated. \footnote{We update $f_{ar}$ 5 times per update of $p_g$ to generate this figure; we also include the results with less updates of $f_{ar}$ in Appendix \ref{appendix:energy-fitting}.}

\afterpage{
\begin{table}[!t]%
\centering
\small
\begin{tabular}{lccc}
\toprule
&  \multicolumn{3}{c}{$\log p(x)$} \\
\midrule
 & \multicolumn{1}{c}{\it {MLP}} & \multicolumn{1}{c}{\it {Conv}} & \multicolumn{1}{c}{\it {ResConv}} \\
Gaussian$^\dagger$     &        -85.0          &          -81.9          & - \\
HVI aux$^\dagger$     &         -83.8          &          -81.6          & - \\
AVB$^\dagger$      &         -83.7          &         -81.7          & - \\
\midrule
Gaussian &          -84.40           &       -81.82          &       -80.75          \\
HVI aux &        -84.63          &        -81.87          & -80.80  \\
HVI AR-DAE (ours) &     \textbf{-83.42}     &      -81.46          &  -80.45 \\
IVI AR-DAE (ours) &     -83.62          &      \textbf{-81.26}     &     \textbf{-79.18}    \\
\bottomrule
\end{tabular}%
\caption{
Dynamically binarized MNIST. %
$^\dagger$Results taken from \citet{MeschederNG17/icml}. 
}
\label{tab:vae-result-dbmnist}
\end{table}

\begin{table}[!t]%
\centering
\small
\begin{tabular}{lc}
\toprule
    Model & $\log p(x)$ \\ \hline
{\small(Models with a trained prior)}\\
VLAE {\scriptsize\citep{chen2016variational}} & \textbf{-79.03} \\
PixelHVAE + VampPrior {\scriptsize\citep{tomczak2018vae}}
& -79.78 \\
\midrule
{\small(Models without a trained prior)}\\
VAE IAF {\scriptsize\citep{KingmaSJCCSW16/nips}} & -79.88 \\
VAE NAF {\scriptsize\citep{HuangKLC18/icml}} & -79.86 \\
Diagonal Gaussian & -81.43 \\
IVI AR-DAE (ours) & \textbf{-79.61} \\
\bottomrule
\end{tabular}
\caption{
Statically binarized MNIST.
}
\label{tab:vae-results-sbmnist}
\end{table}

\begin{figure}[!t]%
\centering
\includegraphics[width=\linewidth]{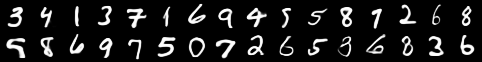}

\vspace*{-.2cm}

\caption{
Generated samples (the mean value of the decoder) of the IVI AR-DAE trained on statically binarized MNIST.
}
\label{fig:result-generated-samples-ivae}

\vspace*{-.2cm}
\end{figure}
}

\subsection{Variational autoencoder}%
\label{sec:vae}

In the previous section, we have demonstrated that AR-DAE can robustly approximate the gradient of the log density function that is constantly changing and getting closer to some target distribution. 
In this section, we move on to a more challenging application: likelihood-free inference for variational autoencoders (VAE, \citet{kingma2013auto, RezendeMW14/icml}). 
Let $p(z)$ be %
the standard normal.
We assume the data is generated by $x\sim p(x|z)$ which is parameterized by a deep neural network. 
To estimate the parameters, we maximize the marginal likelihood $\int p(x|z)p(z) dz$ of the data $x$, sampled from some data distribution $p_{\tt{data}}(x)$. 
Since the marginal likelihood is usually intractable, the standard approach is to maximize the \emph{evidence lower bound} (ELBO):
\eq{
\label{eq:elbo}
\log p(x) \ge \eE_{z \sim q(z|x)} \lbs \log p(x, z) - \log q(z|x) \rbs
,
}
where $q(z|x)$ is an amortized variational posterior distribution. 
The ELBO allows us to jointly optimize $p(x|z)$ and $q(z|x)$ with a unified objective.

Note that the equality holds 
\emph{iff}
$q(z|x)=p(z|x)$,
which
motivates using more flexible families of variational posterior. 
Note that this is more challenging for two reasons: the target distribution $p(z|x)$ is constantly changing and is conditional.
Similar to \citet{MeschederNG17/icml}, we parameterize a conditional sampler $z=g(\epsilon,x)$, $\epsilon\sim\mathcal{N}(0,I)$ with an implicit $q(z|x)$.
We use AR-DAE to approximate $\nabla_z \log q(z|x)$ and 
estimate the entropy gradient to update the encoder while maximizing the ELBO. %
To train AR-DAE, instead of fixing the prior variance $\delta$ in the $\cL_{ar}$ we adaptively choose $\delta$ for different data points. 
See Appendix \ref{appendix:vae} for a detailed description of the algorithm and heuristics we use.

\para{Toy dataset}
To demonstrate the difficulties in inference, %
we train a VAE with a 2-D latent space on a mixture of 25 Gaussians.
See Appendix \ref{app:subsec:vae-experiments} for the experimental details.

In Figure \ref{fig:dae_mog_vae}, we see that
if a fixed $\sigma$ is chosen to be too large
for the residual DAE, the DAE
tends to underestimate the gradient of the entropy,
so the variational posteriors collapse to point masses. %
If $\sigma$ is too small,
the DAE 
manages to maintain a non-degenerate variational posterior, %
but the inaccurate gradient approximation
results in a non-smooth encoder and 
poor generation quality. On the contrary, the same model trained with AR-DAE has a very smooth encoder that maps the data into a Gaussian-shaped, %
aggregated posterior %
and approximates the data distribution accurately.

\para{MNIST}
We first demonstrate the robustness of our method on different choices of architectures for VAE:
(1) a one-hidden-layer fully-connected network (denoted by {\it MLP}),
(2) a convolutional network (denoted by {\it Conv}), %
and (3) a larger convolutional network with residual connections (denoted by {\it ResConv}) from \cite{HuangKLC18/icml}.
The first two architectures are taken from \citet{MeschederNG17/icml} for a direct comparison with the adversarially trained implicit variational posteriors (AVB). 
We also implement a diagonal Gaussian baseline and the auxiliary hierarchical method (HVI aux, \citep{maaloe2016auxiliary}). 
We apply AR-DAE to estimate the entropy gradient of the hierarchical posterior and the implicit posterior (denoted by HVI AR-DAE and IVI AR-DAE, respectively).
As shown in Table \ref{tab:vae-result-dbmnist}, %
AR-DAE consistently improves the quality of inference in comparison to the auxiliary variational method and AVB, %
which is reflected by the better likelihood estimates.

We then compare our method with state-of-the-art VAEs evaluated on the statically binarized MNIST dataset \citep{larochelle2011neural}. %
We use the implicit distribution with the {\it ResConv} architecture following the previous ablation. 
As shown in Table \ref{tab:vae-results-sbmnist}, the VAE trained with AR-DAE demonstrates state-of-the-art performance among models with a fixed prior.
Generated samples are presented in Figure \ref{fig:result-generated-samples-ivae}.

\subsection{Entropy-regularized reinforcement learning}
\begin{figure*}[h]%
\centering

\vspace*{-0.3cm}

\begin{subfigure}[b]{0.38\textwidth}
\centering
\hspace*{0.4cm}\includegraphics[width=\textwidth]{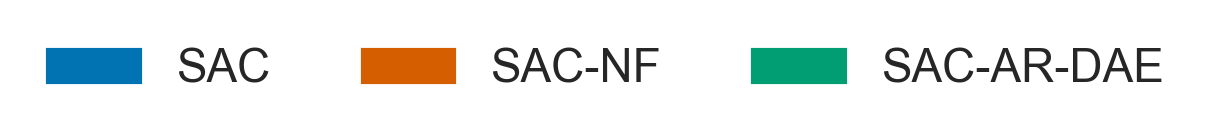}
\end{subfigure}

\vspace*{-0.2cm}

\begin{subfigure}[b]{\textwidth}
\centering
\hspace*{-0.3cm}\includegraphics[width=\textwidth]{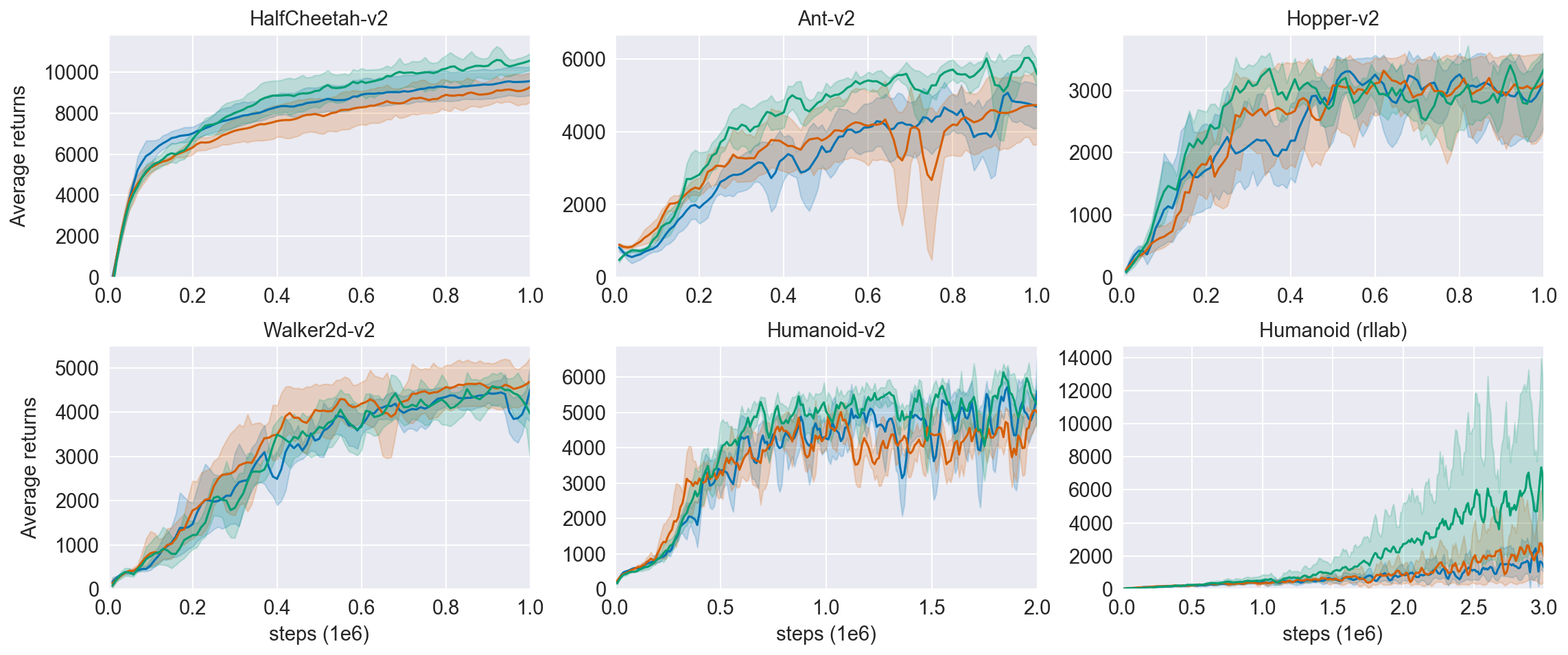}
\end{subfigure}

\vspace*{-0.3cm}

\caption{
Continuous control in reinforcement learning. SAC: soft actor-critic with diagonal Gaussian. SAC-NF: soft actor-critic with normalizing flows. SAC-AR-DAE: soft actor-critic with implicit distribution trained with AR-DAE. %
The shaded area indicates the standard error with 5 runs.
}
\label{fig:results-rl}

\vspace{-0.2cm}

\end{figure*}

We now apply AR-DAE to approximate entropy gradient in the context of reinforcement learning (RL). %
We use the
\emph{soft actor-critic} (SAC, \citet{HaarnojaZAL18/icml}), %
a state-of-the-art off-policy %
algorithm for continuous control that is designed to encourage exploration by regularizing the entropy of the policy. %
We train the policy $\pi(a|s)$ to minimize the following objective:
\eqst{
\cL(\pi) = \eE_{s \sim \cD} \lbs 
\kld \lbp
\pi(a|s) \middle\Vert \f{\exp \lbp Q(s, a) \rbp}{Z(s)}
\rbp
\rbs
,
}
where $\cD$ is a replay buffer of the past experience of the agent, %
$Q$ is a ``soft'' state-action value function that approximates the entropy-regularized expected return of the policy, and $Z(s) = \int_{a} \exp( Q(s, a) ) da$ is the normalizing constant of the Gibbs distribution.~A complete description of the SAC algorithm can be found in Appendix \ref{appendix:subsec:sac}. %
We compare with the original SAC that uses a diagonal Gaussian distribution as policy and a normalizing flow-based policy proposed by \citet{MazoureDDHP19}. %
We parameterize an implicit policy and use AR-DAE to approximate $\nabla_a \log \pi(a|s)$ to estimate~
\eq{
&\nabla_{\phi} \cL(\pi) \nn\\
&= \eE_{
    \substack{s \sim \cD \\
    a \sim \pi}} 
    \lbs \lbs\nabla_{a} \log \pi_{\phi} (a|s) -  \nabla_{a} Q(s, a) \rbs^{\intercal} \mathbf{J}_{\phi}g_{\phi}(\epsilon, s) \rbs
,
}
where $\pi(a|s)$ is implicitly induced by $a=g_{\phi}(\epsilon,s)$ with $\epsilon\sim\mathcal{N}(0,I)$.~We parameterize $f_{ar}$ as the gradient of a scalar function $F_{ar}$, so that $F_{ar}$ can be interpreted as the unnormalized log-density of the policy which will be used to update the soft Q-network. %
We run our experiments on fix continuous control environments from the 
OpenAI gym benchmark suite \cite{openaigym} and Rllab \cite{duan2016benchmarking}. %
The experimental details can be found in Appendix \ref{app:sac-ar-dae}.

The results are presented in Table \ref{tab:sac-result-max-main} and Figure \ref{fig:results-rl}. We see that SAC-AR-DAE using an implicit policy improves the performance over SAC-NF.~This also shows the approximate gradient signal of AR-DAE is stable and accurate even for reinforcement learning. The extended results for a full comparison of the methods are provided in Table \ref{tab:sac-result-max-full} and \ref{tab:sac-result-auc-full}.

\begin{table}[!t]
\centering
\small
\resizebox{0.48\textwidth}{!}{%
\begin{tabular}{lccc}
\toprule
&\multicolumn{1}{c}{SAC}&\multicolumn{1}{c}{SAC-NF}&\multicolumn{1}{c}{SAC-AR-DAE} \\
\midrule
HalfCheetah-v2 & 9695 $\pm$ 879 & 9325 $\pm$ 775 & \textbf{10907 $\pm$ 664} \\
Ant-v2 & 5345 $\pm$ 553 & 4861 $\pm$ 1091 & \textbf{6190 $\pm$ 128} \\
Hopper-v2 & \textbf{3563 $\pm$ 119} & 3521 $\pm$ 129 & 3556 $\pm$ 127 \\
Walker-v2 & 4612 $\pm$ 249 & 4760 $\pm$ 624 & \textbf{4793 $\pm$ 395} \\
Humanoid-v2 & 5965 $\pm$ 179 & 5467 $\pm$ 44 & \textbf{6275 $\pm$ 202} \\
Humanoid (rllab) & 6099 $\pm$ 8071 & 3442 $\pm$ 3736 & \textbf{10739 $\pm$ 10335} \\
\bottomrule
\end{tabular}%
}
\caption{Maximum average return. $\pm$ corresponds to one standard deviation over five random seeds.}
\label{tab:sac-result-max-main}
\end{table}

\begin{figure}
\centering
\includegraphics[width=0.23\textwidth]{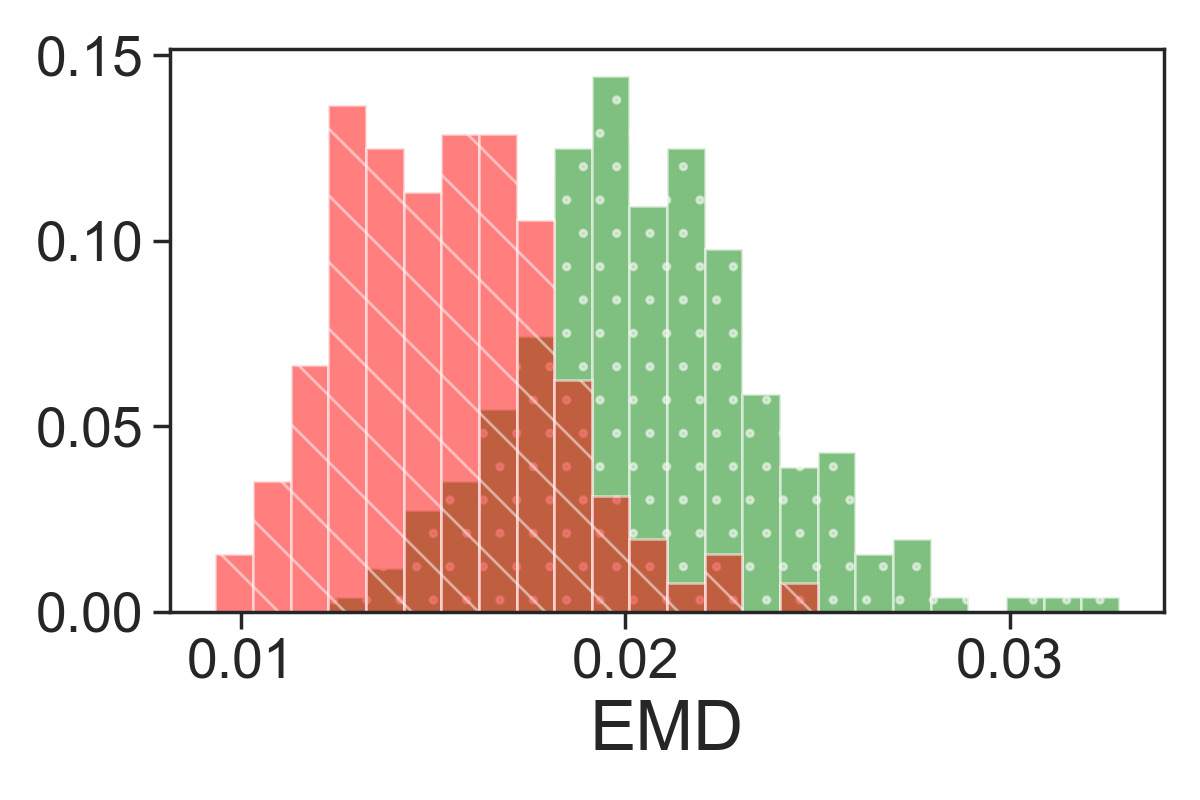}
\hfill
\includegraphics[width=0.23\textwidth]{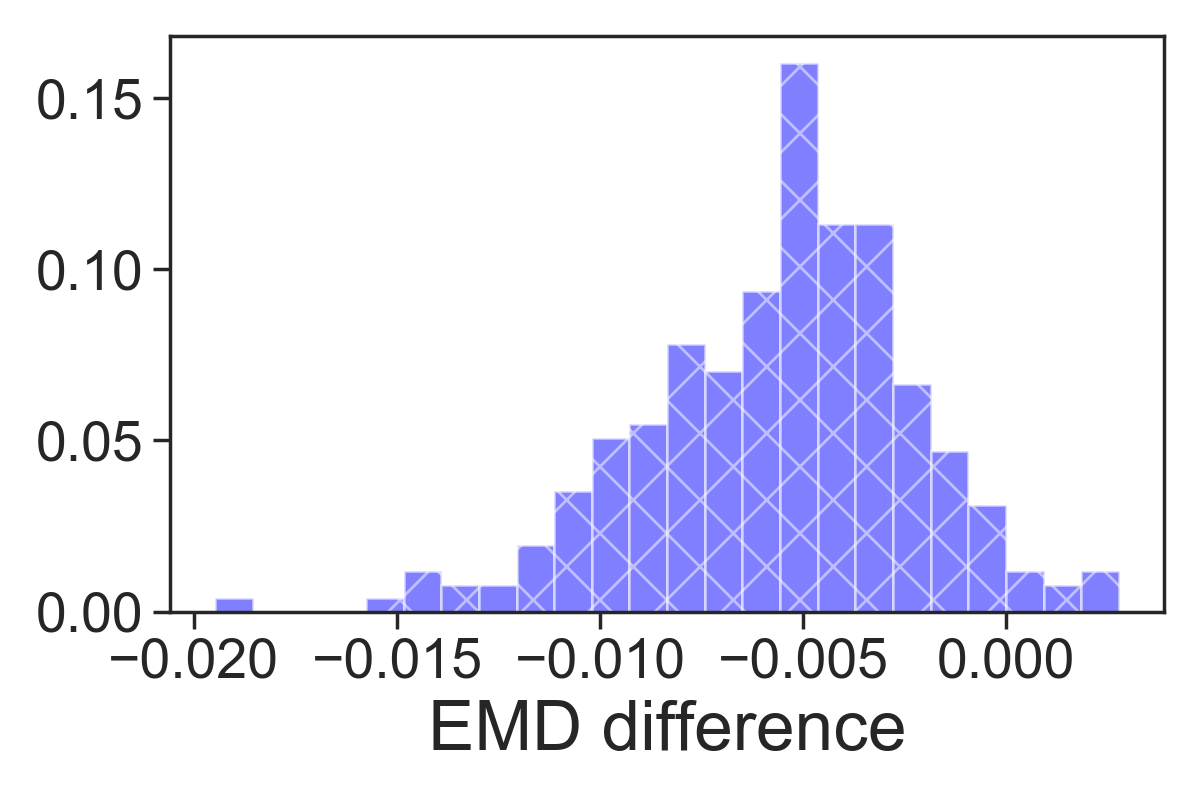}

\vspace*{-0.3cm}

\caption{Maximum entropy principle experiment. Left: estimated EMD of (red) the implicit distribution trained with AR-DAE and (green) the IAF. Right: the estimated EMD of the implicit distribution minus that of the IAF.}
\label{fig:max_ent}

\vspace*{-.4cm}

\end{figure}

\subsection{Maximum entropy modeling}
As a last application, we apply AR-DAE to solve the constrained optimization problem of the \emph{maximum entropy principle}.
Let $m\in\R^{10}$ be a random vector
and $B\in\R^{10\times10}$ be a random matrix
with $m_i$ and $B_{ij}$ drawn i.i.d. from $\cN(0,1)$. 
It is a standard result that among the class of real-valued random vectors $x\in\R^{10}$ satisfying the constraints $\E[x]=m$ and $\Var(x)=B^\top B$, $x\sim\cN(m,B^\top B)$ has the maximal entropy.
Similar to \citet{loaiza2017maximum}, we solve this constrained optimization problem but with an implicit distribution. 
We use the \emph{penalty method} and increasingly penalize the model to satisfy the constraints.
Concretely, let $\tilde{m}$ and $\tilde{C}$ be the sample mean and sample covariance matrix, respectively, estimated with a batch size of $128$.
We minimize the modified objective $-H(p_\theta(x)) + \lambda\sum_{j\in\{1,2\}} c_j^2 $,
where $c_1=||\tilde{m} - m||_2$ and $c_2=||\tilde{C}-B^\top B||_F$, with increasing weighting $\lambda$ on the penalty. 
We estimate the entropy gradient using AR-DAE, and compare against the inverse autoregressive flows (IAF, \citet{KingmaSJCCSW16/nips}).
At the end of training, we %
estimate the earth mover's distance (EMD) from $\cN(m,B^\top B)$. 

We repeat the experiment 256 times and report the histogram of EMD in Figure \ref{fig:max_ent}.
We see that most of the time the implicit model trained with AR-DAE has a smaller EMD, indicating the extra flexibility of arbitrary parameterization allows it to satisfy the geometry of the constraints more easily. %
We leave some more interesting applications suggested in \citet{loaiza2017maximum} for future work. 

\section{Conclusion}
We propose AR-DAE to estimate the entropy gradient of an arbitrarily parameterized data generator. 
We identify the difficulties in approximating the log density gradient with a DAE, and demonstrate the proposed method significantly reduces the approximation error. 
In theory, AR-DAE approximates the zero-noise limit of the optimal DAE, which is an unbiased estimator of the entropy gradient. 
We apply our method to a suite of tasks %
and empirically validate that AR-DAE provides accurate and reliable gradient signal to maximize entropy.

\section*{Acknowledgments}
We would like to thank Guillaume Alain for an insightful discussion on denoising autoencoders.~Special thanks to people who have provided their feedback and advice during discussion, including Bogdan Mazoure and Thang Doan for sharing the code on the RL experiment; to Joseph Paul Cohen for helping optimize the allocation of computational resources.~We thank CIFAR, NSERC and PROMPT for their support of this work. %

\bibliographystyle{icml2020}
\bibliography{refs}

\clearpage
\appendix
\onecolumn
\renewcommand{\thefigure}{S\arabic{figure}}
\setcounter{figure}{0}

\newpage
\section{Gradient of the entropy with respect to density functions}
\label{sec:appendix-entropy-gradient}
Consider a probability density function $p_g(x)$.~We assume $p_g$ is the pushforward of some prior distribution $p(z)$ by a mapping $g_{\theta}: z \mapsto x$. Our goal is to compute the gradient of the entropy of $p_g$ wrt the parameter $\theta$. 
Following \citet{roeder2017sticking}, we show that the entropy gradient can be rewritten as Equation (\ref{eq:entropy-gradient}).
\begin{proof}
By the law of the unconscious statistician$^*$ (LOTUS, Theorem 1.6.9 of \citet{durrett2019probability}), we have
\eqst{
\nabla_{\theta} H(p_{g}(x)) %
&= \nabla_\theta \eE_{x \sim p_{g}(x)} \left[ -\log p_{g}(x) \right] \nn\\
&\overset{*}{=} \nabla_\theta \eE_{z \sim p(z)} \left[ -\log p_{g}(g_{\theta}(z)) \right] \nn\\
&= -\nabla_\theta \int p(z) \log p_{g}(g_{\theta}(z)) dz \nn\\
&= \cancel{-\int p(z) \nabla_{\theta} \log p_{g}(x) |_{x=g_{\theta}(z)} dz}
    - \int p(z) [\nabla_x \log p_{g}(x)|_{x=g_{\theta}(z)}]^{\intercal} \mathbf{J}_{\theta}g_{\theta}(z) dz \nn\\
&= -\eE_{z \sim p(z)} \left[ [\nabla_x \log p_{g}(x)|_{x=g_{\theta}(z)}]^{\intercal} \mathbf{J}_{\theta}g_{\theta}(z) \right] %
.%
}
where the crossed-out term is due to the following identity
\eqst{
\eE_{z \sim p(z)} \left[ \nabla_{\theta} \log p_{g}(x) \middle\vert_{x=g_{\theta}(z)} \right]
&= \eE_{x \sim p_g(x)} \left[ \nabla_{\theta} \log p_{g}(x) \right] %
= \int p_g(x) \nabla_{\theta} \log p_{g}(x) dx \nn\\
&=\int \cancel{ p_g(x) } \f{1}{\cancel{p_g(x)}} \nabla_{\theta} p_{g}(x) dx %
= \nabla_{\theta} \int p_{g}(x) dx = \nabla_{\theta} 1 = 0 %
.
}
\end{proof}

\section{Properties of residual DAE}

\optimalres*
\begin{proof}
For simplicy, when the absolute value and power are both applied to a vector-valued variable, they are applied elementwise. 
The characterization of the optimal function $f^*$ can be derived by following \citet{AlainB14/jmlr}.
For the second part, the symmetry of the distribution of $u$ implies
\begin{align*}
f^*(x;\sigma)
&= \frac{-\eE_u[p(x-\sigma u)u]}{\sigma \eE_u[p(x-\sigma u)]}\\
&= \frac{\eE_u[p(x+\sigma u)u]}{\sigma \eE_u[p(x+\sigma u)]} = f^*(x; -\sigma),
\end{align*}
so we only need to show $f^*$ is continuous for $\sigma>0$. 
Since $p$ is bounded,
by the {dominated convergence theorem} (DOM), both 
$\eE_u[p(x-\sigma u)u]$ and $\eE_u[p(x-\sigma u)]$ are continuous for $\sigma>0$, and so is $f^*(x,\sigma)$.

Lastly, an application of L'Hôpital's rule gives
\begin{align*}
\lim_{\sigma\rightarrow0} f^*(x;\sigma) 
= \lim_{\sigma\rightarrow0} \frac{\frac{d}{d\sigma}\eE_u[p(x+\sigma u)u]}{\frac{d}{d\sigma} \sigma\eE_u[p(x+\sigma u)]},
\intertext{which by another application of DOM (since gradient of $p$ is bounded) is equal to}
\lim_{\sigma\rightarrow0} \frac{\eE[\nabla p(x+\sigma u)^\top uu]}{\eE[p(x+\sigma u)] + \sigma \eE[\nabla p(x+\sigma u)^\top u]}
.
\end{align*}
Applying DOM a final time gives 
$$\lim_{\sigma\rightarrow0}f^*(x;\sigma)  = \frac{\nabla p(x)\odot\E[u^2]}{p(x)} = \nabla \log p(x).$$
\end{proof}

\reslargesigma*
\begin{proof}
We rewrite the optimal gradient approximator as
$$f^*(x;\sigma)=\frac{1}{\sigma^2}\int \frac{\cN(u;0,I)p(x-\sigma u)}{\int \cN(u';0,I)p(x-\sigma u') du'} \cdot \sigma u\, du .$$
Changing the variables $\epsilon=\sigma u$ and $\epsilon'=\sigma u'$ gives
$$\frac{1}{\sigma^2}\int \frac{\cN(\epsilon/\sigma;0,I)p(x-\epsilon)}{\int \cN(\epsilon'/\sigma;0,I)p(x-\epsilon') d\epsilon'} \cdot \epsilon\, d\epsilon ,$$
which can be written as $\frac{1}{\sigma^2}\E_{q(\epsilon)}[\epsilon]$ where $q(\epsilon)\propto \cN(\epsilon/\sigma;0,I)p(x-\epsilon)$ is the change-of-variable density.

By DOM (applied to the numerator and denominator separately, since the standard Gaussian density is bounded), $\E_q[\epsilon]\rightarrow\int p(x-\epsilon) \epsilon\, d\epsilon$ 
as $\sigma\rightarrow\infty$. 
The latter integral is equal to $\E_{p}[X]-x$ (which can be seen by substituting $y=x-\epsilon$). 
\end{proof}

\section{Signal-to-noise ratio analysis on DAE's gradient}
\label{app:snr}
Fixing $x$ and $u$, the gradient of the L2 loss can be written as
\begin{align*}
\Delta := \nabla || u + \sigma f(x+\sigma u)||^2 
&= \nabla \left(\sum_{i}( u_i + \sigma f_i(x+\sigma u) )^2\right) 
= \sum_{i} \nabla  ( u_i + \sigma f_i(x+\sigma u) )^2
,
\end{align*}
where $i$ iterates over the entries of the vectors $u$ and $f$, and $\nabla$ denotes the gradient wrt the parameters of $f$. 
We further expand the gradient of the summand via chain rule, which yields
\begin{align*}
\nabla ( u_i + \sigma f_i(x+\sigma u))^2 
&= 2\sigma (u_i+\sigma f_i(x+\sigma u)) \nabla f_i(x+\sigma u) \\
&= 2\sigma \left(u_i\nabla 
\underbrace{f_i(x+\sigma u)}_{A} + \sigma
\underbrace{f_i(x+\sigma u)}_{B}\nabla
\underbrace{f_i(x+\sigma u)}_{C}\right)
.
\end{align*}

Taylor theorem with the mean-value form of the remainder allows us to approximate $f_i(x+\sigma u)$ by $f_i(x)$ as $\sigma$ is small:
\begin{align}
f_i(x+\sigma u) 
&= f_i(x) + \sigma \nabla_x f_i(\hat{x})^\top u \label{eq:zeroth_order_approx} \\
&= f_i(x) + \sigma \nabla_x f_i(x)^\top u + \frac{\sigma^2}{2} u^\top \grad_x^2 f_i(\tilde{x}) u
,
\label{eq:first_order_approx}
\end{align}
where $\nabla_x$ denotes the gradient wrt the input of $f$, and $\hat{x}$ and $\tilde{x}$ are points lying on the line interval connecting $x$ and $x+\sigma u$. 
Plugging (\ref{eq:first_order_approx}) into $A$ and (\ref{eq:zeroth_order_approx}) into $B$ and $C$ gives
\begin{align*}
&2\sigma\left(
u_i \nabla\left(f_i(x) + \sigma \nabla_x f_i(x)^\top u + \frac{\sigma^2}{2} u^\top \grad_x^2 f_i(\tilde{x}) u\right)
+ \sigma 
\left(f_i(x) + \sigma \nabla_x f_i(\hat{x})^\top u \right)
\nabla\left(f_i(x) + \sigma \nabla_x f_i(\hat{x})^\top u \right)
\right) \\
&=
2\sigma u_i \nabla f_i(x) + 
2\sigma^2 u_i \nabla \nabla_x f_i(x)^\top u + 
\sigma^3 u_i \nabla u^\top \grad_x^2 f_i(\tilde{x}) u \\
&\quad + 
2\sigma^2 f_i(x)\nabla f_i(x) +
2\sigma^3 f_i(x) \nabla\nabla_x f_i(\hat{x})^\top u +
2\sigma^3 \nabla_x \left(f_i(\hat{x})^\top u\right) \nabla f_i(x) +
2\sigma^4 \nabla_x \left(f_i(\hat{x})^\top u\right) \nabla\nabla_x f_i(\hat{x})^\top u
.
\end{align*}

With some regularity conditions (DOM-style assumptions), 
marginalizing out $u$ and taking $\sigma$ to be arbitrarily small yield
\begin{align*}
\E_u[ \Delta ]
&= \sum_{i} 2\sigma^2 \nabla \frac{\partial}{\partial x_i} f_i(x) + 2\sigma^2 f_i(x)\nabla f_i(x) + o(\sigma^2) \\
&= 2\sigma^2 \nabla\left(\tr(\nabla_x f(x)) + \frac{1}{2} ||f(x)||^2\right) + o(\sigma^2)
.
\end{align*}
In fact, we note that the first term is the stochastic gradient of the implicit score matching objective (Theorem 1, \citet{hyvarinen2005estimation}), but it vanishes at a rate $\mathcal{O}(\sigma^2)$ as $\sigma^2\rightarrow0$.

For the second moment, 
similarly,
$$\E_u[\Delta \Delta^\top] = 
4\sigma^2 \sum_i \nabla f_i(x) \nabla f_i(x)^\top + o(\sigma^2).$$
As a result,
$$\frac{\E[\Delta]}{\sqrt{\Var(\Delta)}}
=\frac{\E[\Delta]}{\sqrt{
\E(\Delta\Delta^\top) - \E(\Delta)\E(\Delta)^\top 
}}
=\frac{\mathcal{O}(\sigma^2)}{\sqrt{\mathcal{O}(\sigma^2) - \mathcal{O}(\sigma^4)}}
= \mathcal{O}(\sigma)
.
$$

\section{Experiment: Error analysis}
\subsection{Main experiments}
\label{app:error_analysis}

\begin{figure}[h]
\centering
\includegraphics[width=0.75\textwidth]{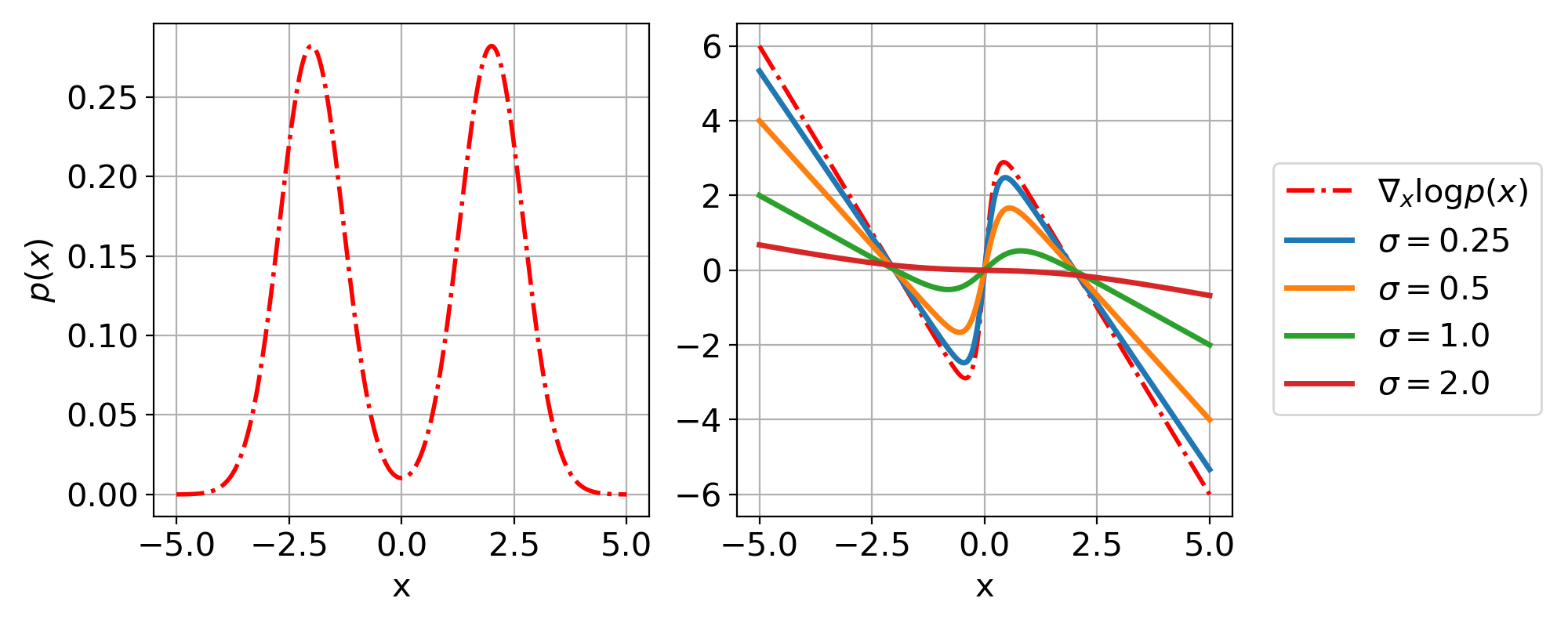}

\vspace*{-0.5cm}

\caption{Left: Density function of mixture of Gaussians. 
Right: gradient of the log density function (dotdash line) and gradient approximations using optimal DAE with different $\sigma$ values (solid lines). }
\label{fig:1d-mog-data-exapmles}
\end{figure}

\para{Dataset and optimal gradient approximator}
As we have described in Section \ref{sec:approx_err_ardae}, we use the mixture of two Gaussians to analyze the approximation error (see Figure \ref{fig:1d-mog-data-exapmles} (left)). Formally, we define $p(x) = 0.5 \cN(x; 2, 0.25) + 0.5 \cN(x; -2, 0.25)$. 
For notational convenience, we let $p_1$ and $p_2$ be the density functions of these two Gaussians, respectively. %
We obtain $\nabla_x \log p(x) $ by differentiating $\log p(x)$ wrt $x$ using auto-differentiation library such as PyTorch \citep{paszke2017automatic}. 
With some elementary calculation, we can expand the formula of the optimal gradient approximator $f^*$ as,
\eqst{
f^*(x;\sigma)
= \frac{-\eE_u[p(x-\sigma u)u]}{\sigma \eE_u[p(x-\sigma u)]}
= \frac{-\sum_{i=1}^{2} S'_i \mu'_i}{ \sigma \sum_{i=1}^{2} S'_i }
,
}
where $S'_i = \nicefrac{1}{\sqrt{2 \pi (0.5^2 + 1^2)}} \exp \lbp -\nicefrac{(\mu_i+x/\sigma)^2}{2(0.5^2 + 1^2)} \rbp$ for $i \in {1, 2}$, $\mu_1 = -2$, and $\mu_1 = 2$.
\begin{proof} The numerator $\eE_u[\eE_u[p(x-\sigma u)u](x-\sigma u)u]$ can be rewritten as follows:
\eqst{
\eE_u[p(x-\sigma u)u]
&= \int \lbp 0.5 p_1(x-\sigma u) + 0.5 p_2(x-\sigma u) \rbp p(u) u \,du %
= \f{0.5}{\sigma} \sum_{i=1}^{2} S'_i \int \cN(u; \mu'_i, \sigma'_i) u \,du %
= \f{0.5}{\sigma} \sum_{i=1}^{2} S'_i \mu'_i %
,
}
where $S'_i = \nicefrac{1}{\sqrt{2 \pi (0.5^2 + 1^2)}} \exp \lbp -\nicefrac{(\mu_i+x/\sigma)^2}{2(0.5^2 + 1^2)} \rbp$ for $i \in {1, 2}$, $\mu_1 = -2$, and $\mu_1 = 2$.

The second equality comes from the fact that all $p_1$, $p_2$, and $p(u)$ are normal distributions, and thus we have
\eqst{
p_i(x-\sigma u)p(u)
= \f{1}{\sigma} p_i \lbp u - x/\sigma \rbp p(u) %
= \f{1}{\sigma} S'_i \cN(u; \mu'_i, \sigma'_i) %
.
}
Similarly, we can rewrite the denominator as $\eE_u[p(x-\sigma u)] = \f{0.5}{\sigma} \sum_{i=1}^{2} S'_i$.
\end{proof}

\para{Experiments}
For AR-DAE, we indirectly parameterize it as the gradient of some scalar-function (which can be thought of as an unnormalized log-density function); \emph{i.e.} we define a scalar function and use its gradient wrt the input vector. The same trick has also been employed in recent work by \citet{saremi2018deep, saremi2019neural}. We use the network architecture with the following configuration\footnote{$[d_{\tt{input}}, d_{\tt{output}}]$ denotes a fully-connected layer whose input and output feature sizes are $d_{\tt{input}}$ and $d_{\tt{output}}$, respectively.}: $[2+1, 256]$ + $[256, 256] \times 2$ + $[256, 1]$, with the {\tt Softplus} activation function. We use the same network architecture for {\it resDAE} except it doesn't condition on $\sigma$. For {\it regDAE}, the network is set to reconstruct input.

All models are trained for 10k iterations with a minibatch size of 256.
We use the Adam optimizer for both AR-DAE and the generator, with the default $\beta_1=0.9$ and $\beta_2=0.999$.
For all models, the learning rate is initially set to 0.001 and is reduced by half every 1k iterations during training.

For {\it regDAE} and {\it resDAE}, we train models individually for every $\sigma$ value in Figure \ref{fig:dae-1d-err}.
For {\it regDAE\textsubscript{\it annealed}} and {\it resDAE\textsubscript{\it annealed}}, we anneal $\sigma$ from 1 to the target value.%
For AR-DAE, $\delta$ is set to 0.05 and we sample 10 $\sigma$'s from $N(0, \delta^2)$ for each iteration.
We train all models five times and present the mean and its standard error in the figures.

\subsection{Symmetrizing the distribution of $\sigma$}
In Section \ref{subsec:ardae}, we argue that neural networks are not suitable for extrapolation (vs. interpolation), to motivate the use of a symmetric prior over $\sigma$. 
To contrast the difference, %
we sample $\sigma \sim N(0, \delta^2)$ and compare two different types of $\sigma$-conditioning: (1) conditioning on $\sigma$, and (2) conditioning on $|\sigma|$. We use the same experiment settings in the previous section, %
but we use a hypernetwork \citep{HaDL17/iclr} that takes $\sigma$ (resp. $|\sigma|$) as input and outputs the parameters of AR-DAE, to force AR-DAE to be more dependent on the value of $\sigma$ (resp. $|\sigma|$). The results are shown in Figure \ref{fig:dae-1d-err-intra-vs-extra}.

We see that the two conditioning methods result in two distinct approximation behaviors. First, when AR-DAE only observes positive values, it fails to extrapolate to the $\sigma$ values close to 0. When a symmetric $\sigma$ distribution is used, the approximation error of AR-DAE is more smooth. Second, we notice that the symmetric $\sigma$ distribution bias $f_{ar}$ to focus more on small $\sigma$ values. %
Finally, the asymmetric distribution helps AR-DAE reduce the approximation error for some $\sigma$. We speculate that AR-DAE with the asymmetric $\sigma$ distribution has two times higher to observe small $\sigma$-values during training, and thus improves the approximation. In general, we observe that the stability of the approximation is important for our applications, in which case AR-DAE need to adapt constantly in the face of non-stationary distributions. %

\begin{figure}[h]
    \centering
    \includegraphics[width=0.45\textwidth]{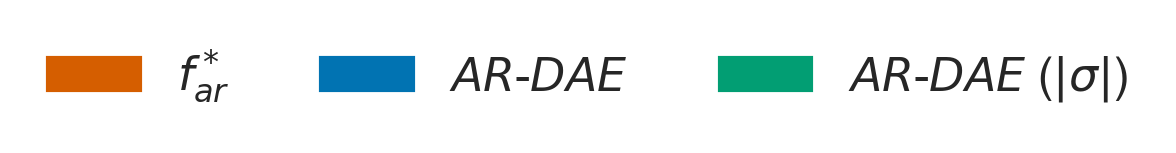}
    
    \vspace*{-0.2cm}
    
    \includegraphics[width=0.5\textwidth]{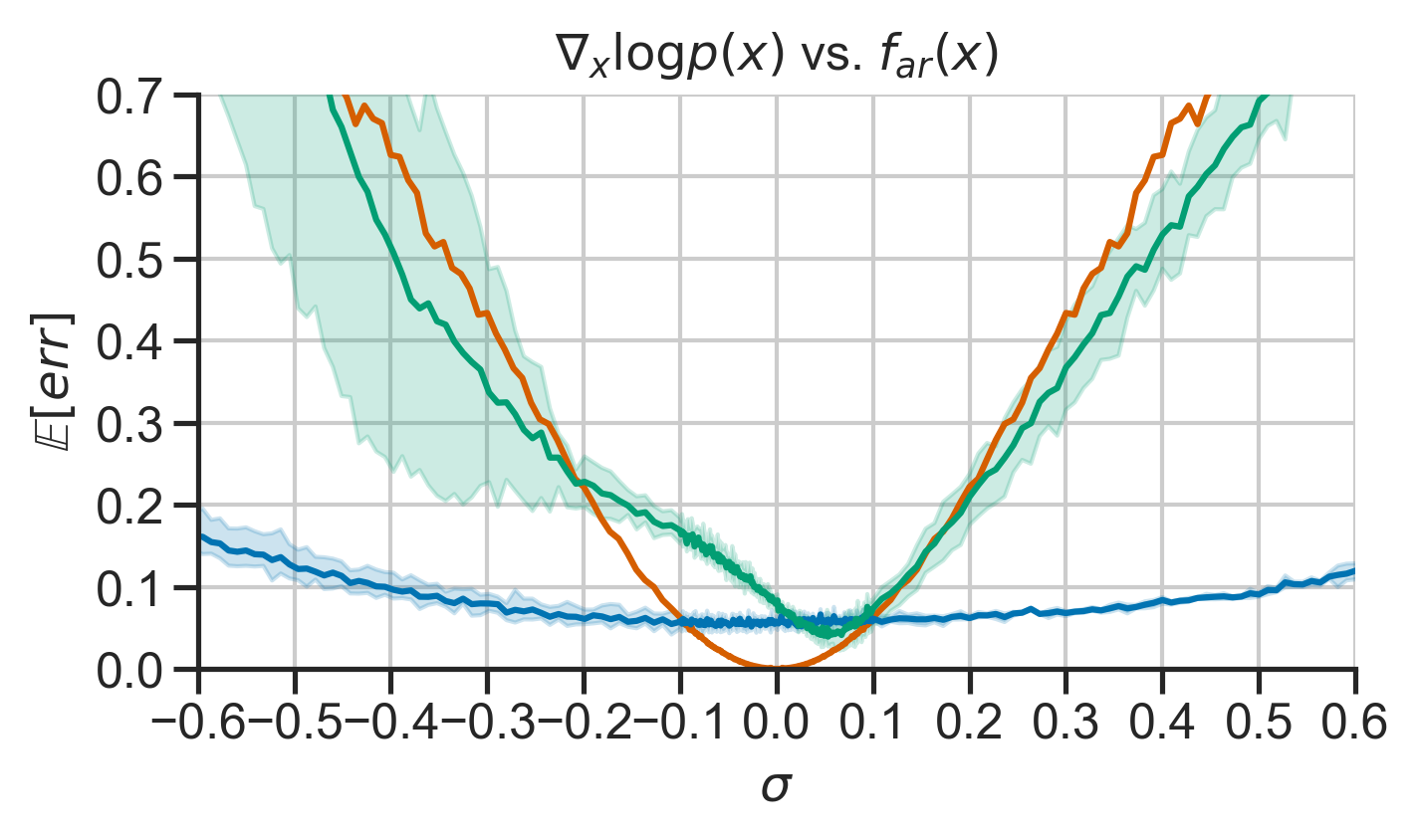}
    
    \vspace*{-0.4cm}
    
    \caption{Comparison of two $\sigma$-conditioning methods to approximate log density gradient of 1D-MOG. AR-DAE: conditioning on $\sigma$. AR-DAE ($|\sigma|$): conditioning on $|\sigma|$. $\sigma$ is sampled from $N(0, \delta)$ for all experiments.
    }
    \label{fig:dae-1d-err-intra-vs-extra}
\end{figure}

\section{Experiment: Energy Fitting}
\label{appendix:energy-fitting}

\begin{table*}[!h]
\centering
\begin{tabular}{l}
    \toprule
    \textbf{Potential} $U(\bz)$ \\ 
    \midrule
    \textbf{1}: $\f{1}{2} \lbp \f{\lbV \bz \rbV - 2}{0.4}\rbp^2
    - \ln \lbp
    e^{-\f{1}{2} \lbs \f{z_1-2}{0.6} \rbs^2}
    + e^{-\f{1}{2} \lbs \f{z_1+2}{0.6} \rbs^2}
    \rbp$ \\
    \textbf{2}: $\f{1}{2} \lbp \f{z_2 - w_1(\bz)}{0.4}\rbp^2$ \\
    \textbf{3}: $
    - \ln \lbp
    e^{-\f{1}{2} \lbs \f{z_2 - w_1(\bz)}{0.35} \rbs^2}
    + e^{-\f{1}{2} \lbs \f{z_2 - w_1(\bz) + w_2(\bz)}{0.35} \rbs^2}
    \rbp$ \\
    \textbf{4}: $
    - \ln \lbp
    e^{-\f{1}{2} \lbs \f{z_2 - w_1(\bz)}{0.4} \rbs^2}
    + e^{-\f{1}{2} \lbs \f{z_2 - w_1(\bz) + w_3(\bz)}{0.35} \rbs^2}
    \rbp$ \\
    \midrule
    where $w_1(\bz) = \sin \lbp \f{2 \pi z_1}{4} \rbp$,
    $w_2(\bz) = 3  e^{-\f{1}{2} \lbs \f{z_1 - 1}{0.6} \rbs^2}$,
    $w_3(\bz) = 3  \sigma \lbp \f{z_1 - 1}{0.3}\rbp$, 
    $\sigma(x) = \f{1}{1+e^{-x}}$.
    \\
    \bottomrule
\end{tabular}
\caption{
The target energy functions introduced in \citet{RezendeM15/icml}.
}
\label{tab:target-energy-functions}
\end{table*}

\subsection{Main experiments}
Parametric densities trained by minimizing the reverse KL divergence tend to avoid ``false positive'', a well known problem known as the zero-forcing property \citep{minka2005divergence}. %
To deal with this issue, we minimize a modified objective:
\eq{
\label{eq:reverse-kl-alpha}
{\kld}_{\alpha}(p_g(x) || p_{\tt{target}}(x)) = - H(p_g(x)) - \alpha \eE_{x \sim p_g(x)} \lbs \log p_{\tt{target}}(x) \rbs
,
}
where $\alpha$ is annealed from a small value to 1.0 throughout training.
This slight modification of the objective function ``convexifies'' the loss landscape and makes it easier for the parametric densities to search for the lower energy regions. For AR-DAE training, we use Equation (\ref{eq:ardae}) with a fixed prior variance $\delta = 0.1$.

For all experiments, we use a three-hidden-layer MLP for both hierarchical distribution as well as implicit distribution. More specifically, the generator network for the hierarchical distribution has the following configuration: $[d_z, 256]$ + $[256, 256] \times 2$ + $[256, 2] \times 2$. $d_z$ indicates the dimension of the prior distribution $p(z)$ and is set to 2. The last two layers are for mean and log-variance\footnote{diagonal elements of the covariance matrix in log-scale} of the conditional distribution $p_g(x|z)$. For the auxiliary variational method, the same network architecture is used for $h(z|x)$ in Equation (\ref{eq:ent-lower-bound-aux}). When we train the hierarchical distribution with AR-DAE, we additionally clamp the log-variance to be higher than -4. Similar to the hierarchical distribution, the generator of the implicit distribution is defined as, $[d_z, 256]$ + $[256, 256] \times 2$ + $[256, 2]$. Unlike the hierarchical distribution, $d_z$ is set to 10. {\tt ReLU} activation function is used for all but the final output layer.

For AR-DAE, we directly parameterize the residual function $f_{ar}$. We use the following network architecture: $[2, 256]$ + $[256, 256] \times 2$ + $[256, 2]$. {\tt Softplus} activation function is used.

Each model is trained for 100,000 iterations with a minibatch size of 1024. We update AR-DAE $N_d$ times per generator update. For the main results, we set $N_d = 5$.
We use the Adam optimizer for both the generator and AR-DAE, where $\beta_1=0.5$ and $\beta_2=0.999$.
The learning rate for the generator is initially set to 0.001 and is reduced by 0.5 for every 5000 iterations during training. AR-DAE's learning rate is set to 0.001.
To generate the figure, we draw 1M samples from each model to fill up 256 equal-width bins of the 2D histogram.

\subsection{Effect of the number of updates ($N_d$) of the gradient approximator}
In addition to the main results, we also analyze how the number of updates of AR-DAE per generator update affects the quality of the generator. We use the same implicit generator and AR-DAE described in the main paper, but vary $N_d$ from 1 to 5. The result is illustrated in Figure \ref{fig:result-unnorm-density-estimation-num-updates}. In principle, the more often we update AR-DAE, the more accurate (or up-to-date) the gradient approximation will be. This is corroborated by the improved quality of the trained generator.

\begin{figure}[!h]%
\centering
\begin{subfigure}[b]{0.0896\textwidth}
\centering
\begin{subfigure}[b]{\textwidth} %
    \parbox{0.1\textwidth}{\subcaption*{\textbf{1}}}%
    \hspace{0.5em}%
    \parbox{0.1\textwidth}{\includegraphics[width=10\linewidth]{figs/fit/fit-po1-data.png}}
    
    \parbox{0.1\textwidth}{\subcaption*{\textbf{2}}}%
    \hspace{0.5em}%
    \parbox{0.1\textwidth}{
    \includegraphics[width=10\linewidth]{figs/fit/fit-po2-data.png}
    }
    
    \parbox{0.1\textwidth}{\subcaption*{\textbf{3}}}%
    \hspace{0.5em}%
    \parbox{0.1\textwidth}{
    \includegraphics[width=10\linewidth]{figs/fit/fit-po3-data.png}
    }
    
    \parbox{0.1\textwidth}{\subcaption*{\textbf{4}}}%
    \hspace{0.5em}%
    \parbox{0.1\textwidth}{
    \includegraphics[width=10\linewidth]{figs/fit/fit-po4-data.png}
    }

    \vspace*{-0.05cm}
    
    \caption*{\small $\quad\,\, \frac{1}{Z}e^{-U(x)}$}
    
    \vspace*{-0.25cm}
    
    \caption*{\scriptsize }
    
\end{subfigure}

\vspace*{-0.35cm}

\caption{}

\end{subfigure}
\hspace{3em}
\begin{subfigure}[b]{0.28\textwidth}
\centering
\begin{subfigure}[b]{0.32\textwidth} %
    \includegraphics[width=\linewidth]{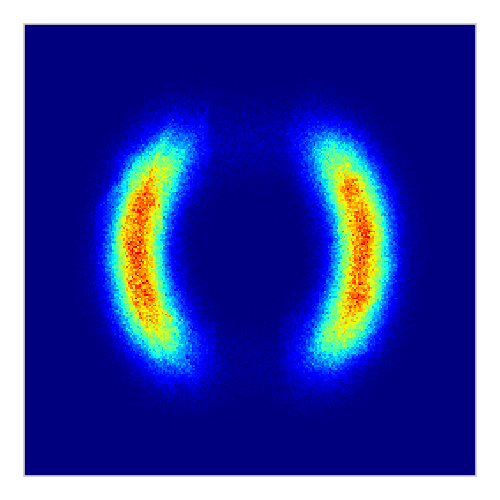}
    \includegraphics[width=\linewidth]{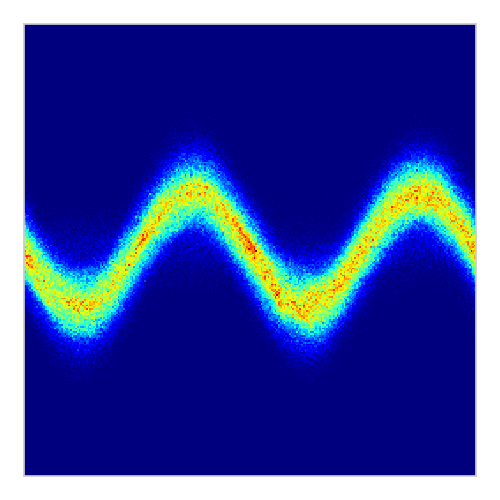}
    \includegraphics[width=\linewidth]{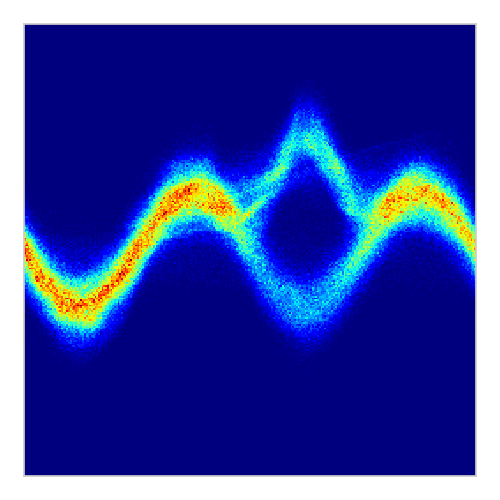}
    \includegraphics[width=\linewidth]{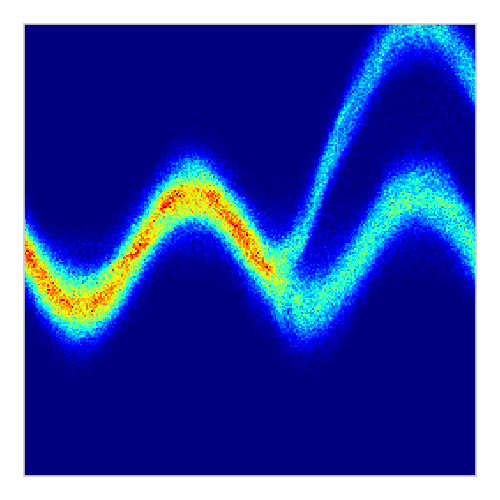}
    
    \vspace*{-0.05cm}

    \caption*{\small $N_d=1$}
    
    \vspace*{-0.2cm}
    
    \caption*{\scriptsize}
    
\end{subfigure}
\begin{subfigure}[b]{0.32\textwidth} %
    \includegraphics[width=\linewidth]{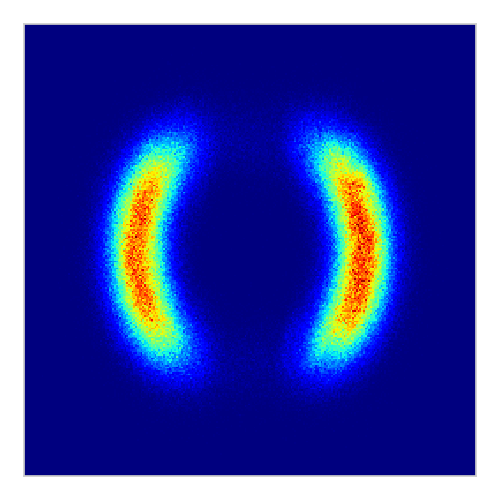}
    \includegraphics[width=\linewidth]{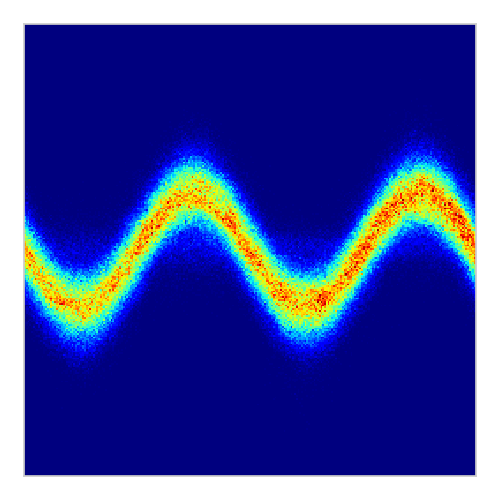}
    \includegraphics[width=\linewidth]{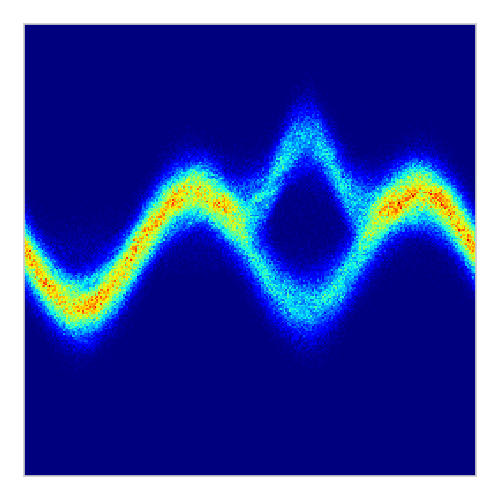}
    \includegraphics[width=\linewidth]{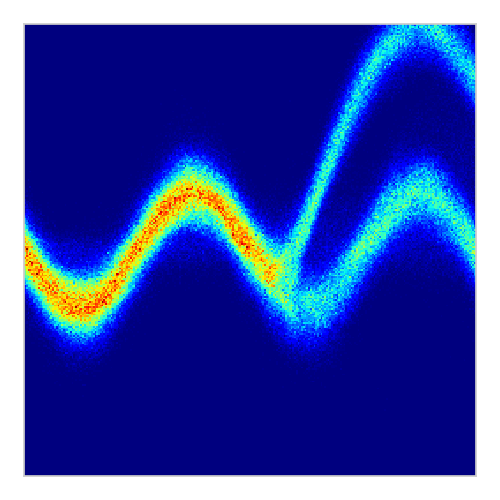}
    
    \vspace*{-0.05cm}

    \caption*{\small  $N_d=2$}
    
    \vspace*{-0.2cm}
    
    \caption*{\scriptsize}
    
\end{subfigure}
\begin{subfigure}[b]{0.32\textwidth} %
    \includegraphics[width=\linewidth]{figs/fit/po1-ardae-im-nd5-nstd10-mnh3.png}
    \includegraphics[width=\linewidth]{figs/fit/po2-ardae-im-nd5-nstd10-mnh3.png}
    \includegraphics[width=\linewidth]{figs/fit/po3-ardae-im-nd5-nstd10-mnh3.png}
    \includegraphics[width=\linewidth]{figs/fit/po4-ardae-im-nd5-nstd10-mnh3.png}
    
    \vspace*{-0.05cm}

    \caption*{\small $N_d=5$}
    
    \vspace*{-0.2cm}
    
    \caption*{\scriptsize}
    
\end{subfigure}

\vspace*{-0.35cm}

\caption{}

\end{subfigure}
\hspace{2em}
\begin{subfigure}[b]{0.28\textwidth}
\centering
\begin{subfigure}[b]{0.32\textwidth} %
    \includegraphics[width=\linewidth]{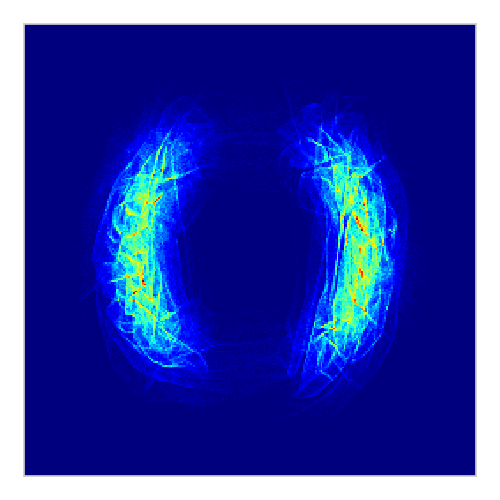}
    \includegraphics[width=\linewidth]{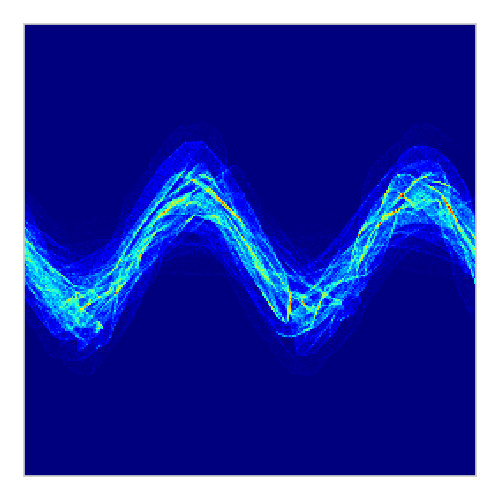}
    \includegraphics[width=\linewidth]{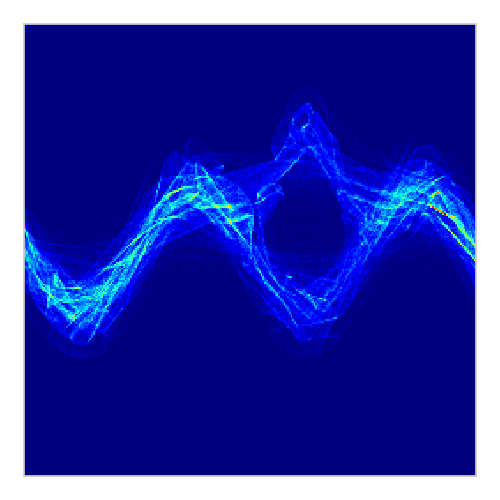}
    \includegraphics[width=\linewidth]{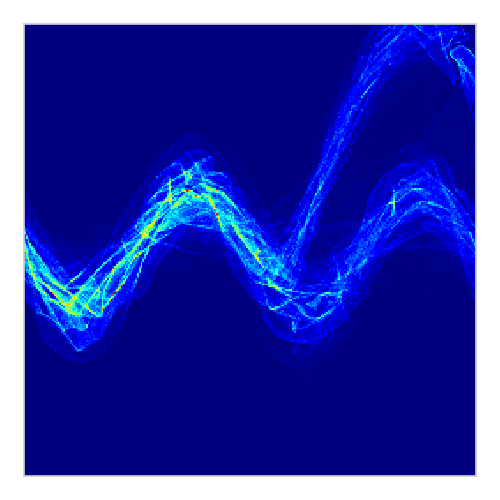}
    
    \vspace*{-0.05cm}

    \caption*{\small $d_z=2$}
    
    \vspace*{-0.2cm}
    
    \caption*{\scriptsize}
    
\end{subfigure}
\begin{subfigure}[b]{0.32\textwidth} %
    \includegraphics[width=\linewidth]{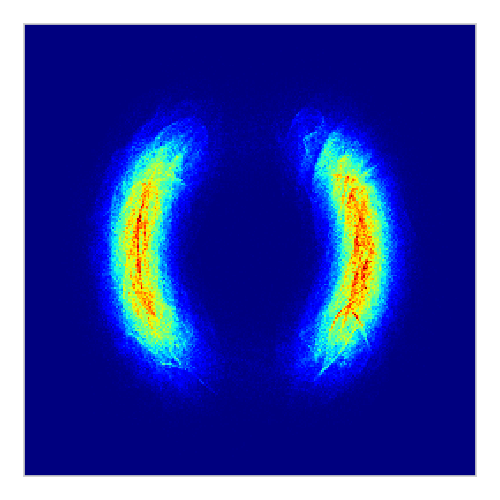}
    \includegraphics[width=\linewidth]{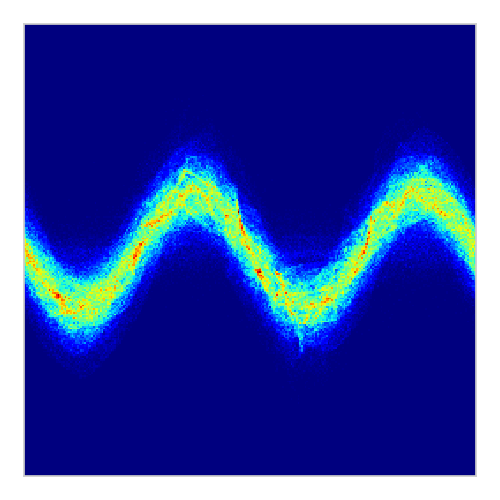}
    \includegraphics[width=\linewidth]{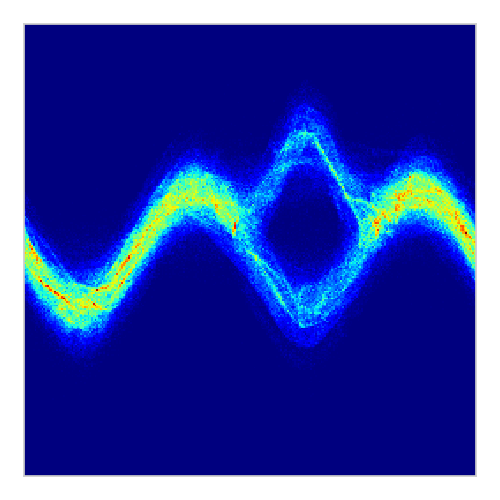}
    \includegraphics[width=\linewidth]{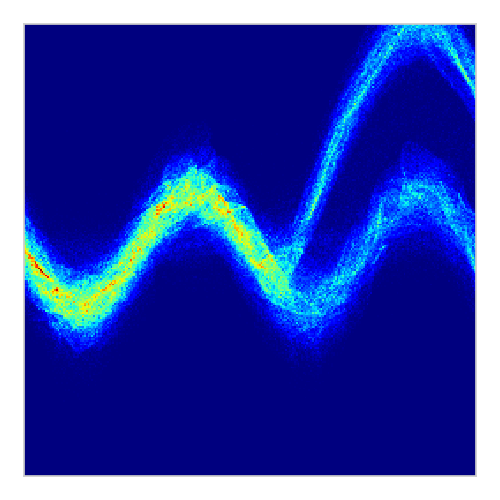}
    
    \vspace*{-0.05cm}

    \caption*{\small  $d_z=3$}
    
    \vspace*{-0.2cm}
    
    \caption*{\scriptsize}
    
\end{subfigure}
\begin{subfigure}[b]{0.32\textwidth} %
    \includegraphics[width=\linewidth]{figs/fit/po1-ardae-im-nd5-nstd10-mnh3.png}
    \includegraphics[width=\linewidth]{figs/fit/po2-ardae-im-nd5-nstd10-mnh3.png}
    \includegraphics[width=\linewidth]{figs/fit/po3-ardae-im-nd5-nstd10-mnh3.png}
    \includegraphics[width=\linewidth]{figs/fit/po4-ardae-im-nd5-nstd10-mnh3.png}
    
    \vspace*{-0.05cm}

    \caption*{\small $d_z=10$}
    
    \vspace*{-0.2cm}
    
    \caption*{\scriptsize}
    
\end{subfigure}

\vspace*{-0.35cm}

\caption{}

\end{subfigure}

\vspace*{-0.4cm}

\caption{
Fitting energy functions with implicit model using AR-DAE. 
(a) Target energy functions. 
(b) Varying number of AR-DAE updates per model update. %
(c) Varying the dimensionality of the noise source $d_z$. 
}
\label{fig:result-unnorm-density-estimation-num-updates}
\end{figure}

\subsection{Effect of the noise dimension of implicit model}
In this section, we study the effect of varying the dimensionality of the noise source of the implicit distribution. 
We use the same experiment settings in the previous section. %
In Figure \ref{fig:result-unnorm-density-estimation-num-updates} (right panel), we see that the generator has a degenerate distribution when $d_z=2$, and the degeneracy can be remedied by increasing $d_z$. %

\section{Experiment: variational autoencoders}
\label{appendix:vae}

\subsection{VAE with the entropy gradient approximator}
Let $p_{\omega}(x|z)$ be the conditional likelihood function parameterized by $\omega$ and $p(z)$ be the the prior distribution. 
We let $p(z)$ be the standard normal. As described in Section \ref{sec:vae}, we would like to maximize the ELBO (denoted as $\cL_{ELBO}$) by jointly training $p_{\omega}$ and the amortized variational posterior $q_{\phi}(z|x)$. Similar to Appendix \ref{sec:appendix-entropy-gradient}, the posterior $q_{\phi}(z|x)$ can be induced by a mapping $g_{\phi}: \epsilon, x \mapsto z$ with a prior $q(\epsilon)$ that does not depend on the parameter $\phi$. %
The gradient of $\cL_{ELBO}$ wrt the parameters of the posterior can be written as,
\eq{
\label{eq:elbo-grad-ardae}
\nabla_{\phi} \cL_{ELBO}(q) = \eE_{
    \substack{z \sim q_{\phi}(z|x) \\
    x \sim p_{\tt{data}}(x)}} 
    \lbs \lbs
    \nabla_{z} \log p_{\omega}(x, z) - \nabla_{z} \log q_{\phi}(z|x)
    \rbs^{\intercal} \mathbf{J}_{\phi}g_{\phi}(\epsilon, x) \rbs
.
}

We plug in AR-DAE to approximate the gradient of the log-density, and draw a Monte-Carlo sample of the following quantity to estimate the gradient of the ELBO
\eq{
\label{eq:elbo-grad-ardae-approx}
\hat{\nabla}_{\phi} \cL_{ELBO}(q) \doteq \eE_{
    \substack{z \sim q_{\phi}(z|x) \\
    x \sim p_{\tt{data}}(x)}} 
    \lbs \lbs
    \nabla_{z} \log p_{\omega}(x, z) - f_{ar,\theta}(z;x, \sigma)|_{\sigma=0}
    \rbs^{\intercal} \mathbf{J}_{\phi}g_{\phi}(\epsilon, x) \rbs
,
}

\subsection{AR-DAE}
\label{app:subsec:ardae-implementations}
To approximate $\nabla_{z} \log q_{\phi}(z|x)$, we condition AR-DAE on both the input $x$ as well as the noise scale $\sigma$. %
We also adaptively choose the prior variance $\delta^2$ for different data points instead of fixing it to be a single value.

In addition, we make the following observations.
(1) The posteriors $q_{\phi}$ are usually not centered, but the entropy gradient approximator only needs to model the dispersion of the distribution. %
(2) The variance of the approximate posterior can be very small during training, which %
might pose a challenge for optimization. 
To remedy these, we modify the input of AR-DAE to be $\tilde{z} \doteq s (z - b(x))$, where $s$ is a scaling factor and $b(x)$ is a pseudo mean. Ideally, we would like to set $b(x)$ to be $\eE_{q(z|x)}[z]$. %
Instead, we let 
$b(x) \doteq g(0, x)$, as $0$ is the mode/mean of the noise source. %
The induced distribution of $\tilde{z}$ will be denoted by $q_{\phi}(\tilde{z}|x)$. By the change-of-variable density formula, we have %
$\nabla_{z} \log q(z|x) = s \nabla_{\tilde{z}} \log q(\tilde{z}|x)$.
This allows us to train AR-DAE with a better-conditioned distribution and the original gradient can be recovered by rescaling. 

In summary, we optimize the following objective
\eq{
\label{eq:ardae++}
\cL_{\tt{ar}}\lbp f_{ar} \rbp
= \eE_{\substack{
        x \sim p(x) \\
        \tilde{z} \sim q(\tilde{z}|x) \\
        u \sim N(0, I) \\
        \sigma|x \sim N(0, \delta(x)^2) \\
        }}
    \lbs \lbV u + \sigma f_{ar}(\tilde{z} + \sigma u; x, \sigma) \rbV^2 \rbs   
.
}
where $\delta(x) \doteq \delta_{\tt{scale}} S_{z|x}$ and $S_{z|x}$ is sample standard deviation of $z$ given $x$. We use $n_z$ samples per data to estimate $S_{z|x}$. $\delta_{\tt{scale}}$ is chosen as hyperparameter.

In the experiments, we either directly parameterize the residual function of AR-DAE or indirectly parameterize it as the gradient of some scalar-function. We parameterize $f_{ar}(\tilde{z}; x, \sigma)$ as a multi-layer perceptron (MLP). Latent $z$ and input $x$ are encoded separately and then concatenated with $\sigma$ (denoted by "mlp-concat" in Table \ref{table:vae-hyperparameters}). The MLP encoders have $m_{\tt{enc}}$ hidden layers. The concatenated representation is fed into a fully-connected neural network with $m_{\tt{fc}}$ hidden layers. Instead of encoding the input $x$ directly, we either use a hidden representation of the variational posterior $q$ or $b(x)$. We use $d_h$ hidden units for all MLPs. We stress that the learning signal from $\cL_{\tt{ar}}\lbp f_{ar} \rbp$ is not backpropagated to the posterior.

\begin{algorithm}[ht]
\caption{VAE AR-DAE}
\label{alg:vae_ardae}
\begin{algorithmic}
\STATE {\bfseries Input:}
Dataset $\cD$;
mini-batch size $n_{\tt{data}}$;
sample size $n_{z}$;%
prior variance $\delta^2$;
learning rates $\alpha_{\theta}$ and $\alpha_{\phi, \omega}$ \\
\STATE Initialize encoder and decoder $p_\omega(x|z)$ and $q_\phi(z|x)$ %
\STATE Initialize AR-DAE $f_{ar, \theta}(z|x)$
\REPEAT
\STATE Draw $n_{\tt{data}}$ datapoints from $\cD$
    \FOR{$k = 0 \dots N_d$}
        \STATE Draw $n_{z}$ latents per datapoint from $z \sim q_{\phi}(z|x)$ %
        \STATE $\delta_i \gets  \delta_{\tt{scale}} S_{z|x_i} \textrm{ for } i=1,\dots, n_{\tt{data}}$
        \STATE Draw $n_{\sigma}$ number of $\sigma_i$s per $z$ from $\sigma_i \sim N(0,\delta_i^2)$ %
        \STATE Draw $n_{\tt{data}}n_{z}n_{\sigma}$ number of $u$s from $u \sim N(0, I)$
        \STATE Update $\theta$ using gradient $\nabla_{\theta} \mathcal{L}_{f_{ar}}$ with learning rate $\alpha_\theta$
    \ENDFOR
    \STATE $z \sim q_{\phi}(z|x)$
    \STATE Update $\omega$ using gradient $\nabla_{\omega} \mathcal{L}_{ELBO}$ with learning rate $\alpha_{\phi, \omega}$
    \STATE Update $\phi$ using gradient $\hat{\nabla}_{\phi} \mathcal{L}_{ELBO}$ with learning rate $\alpha_{\phi, \omega}$, whose entropy gradient is approximated using $f_{ar, \theta}(z|x)$.
\UNTIL{Until some stopping criteria}
\end{algorithmic}
\end{algorithm}

\subsection{Experiments}
We summarize the architecture details and hyperparameters in Table~\ref{table:vae-networks} and \ref{table:vae-hyperparameters}, respectively.
\label{app:subsec:vae-experiments}
\para{Mixture of Gaussian experiment} 
For the MoG experiment, we use 25 Gaussians centered on an evenly spaced 5 by 5 grid on $[-4, 4] \times [-4, 4]$ with a variance of $0.1$. 
Each model is trained for 16 epochs: approximately 4000 updates with a minibatch size of 512.

For all experiments, we use a two-hidden-layer MLP to parameterize the conditional diagonal Gaussian $p(x|z)$. For the implicit posterior $q$, the input $x$ and the $d_{\epsilon}$-dimensional noise are separately encoded with one fully-connected layer, and then the concatenation of their features will be fed into a two-hidden-layer MLP to generate the 2-dimensional latent $z$. 
The size of the noise source $\epsilon$ in the implicit posterior, \emph{i.e.} $d_{\epsilon}$, is set to 10.

\para{MNIST}
We first describe the details of the network architectures and then continue to explain training settings. For the {\it MLP} experiments, we use a one-hidden-layer MLP for the diagonal Gaussian decoder $p(x|z)$. %
For the diagonal Gaussian posterior $q(z|x)$. aka vanilla VAE, input $x$ is fed into a fully-connected layer and then the feature is later used to predict the mean and diagonal component of the covariance matrix of the multivariate Gaussian distribution. For the hierarchical posterior, both $q(z_0|x)$ and $q(z|z_0,x)$ are one-hidden-layer MLPs with diagonal Gaussian similar to the vanilla VAE. For the implicit posterior, the input is first encoded and then concatenated with noise before being fed into another MLP to generate $z$.

For {\it Conv}, the decoder starts with a one-fully connected layer followed by three deconvolutional layers. The encoder has three convolutional layers and is modified depending on the types of the variational posteriors, similar to {\it MLP}. For {\it ResConv}, five convolutional or deconvolutional layers with residual connection are used for the encoder and the decoder respectively. %

Following \citet{maaloe2016auxiliary, ranganath2016hierarchical}, when the auxiliary variational method (HVI aux) is used to train the hierarchical posterior, the variational lower bound is defined as,
we maximize the following lower bound to train the hierarchical variational posterior with auxiliary variable (HVI aux)
\eqst{
\log p(x) 
\ge \eE_{z \sim q(z|x)} \lbs \log p(x, z) - \log q(z|x) \rbs
\ge \eE_{\substack{
z_0 \sim q(z_0|x) \\
z \sim q(z|z_0, x) \\
}} \lbs \log p(x, z) - \log q(z_0|x) - \log q(z|z_0, x) + \log h(z_0 | z, x) \rbs
.
}

For the dynamically binarized MNIST dataset, we adopt the experiment settings of \citet{MeschederNG17/icml}. The MNIST data consists of 50k train, 10k validation, and 10k test images. In addition to the original training images, randomly selected 5k validation images are added to the training set.
Early stopping is performed based on the evaluation on the remaining 5k validation data points. The maximum number of iterations for the training is set to 4M.

For the statically binarized MNIST dataset, we use the original data split. Early stopping as well as hyperparameter search are performed based on the estimated log marginal probability on the validation set.  We retrain the model with the selected hyperparameters with the same number of updates on the combination of the train+valid sets, and report the test set likelihood. We also apply polyak averaging \citep{polyak1992acceleration}. 

We evaluate $\log p(x)$ of the learned models using importance sampling \cite{BurdaGS15/iclr} (with $n_{\textrm{eval}}$ samples). For the baseline methods, we use the learned posteriors as proposal distributions to estimate the log probability.
When a posterior is trained with AR-DAE, we first draw $n_{\textrm{eval}}$ $z$'s from the posterior given the input $x$, and then use the sample mean and covariance matrix to construct a multivariate Gaussian distribution. We then use this Gaussian distribution as the proposal.

\section{Experiment: entropy-regularized reinforcement learning}
\label{appendix:reinforcement-learning}

\subsection{Soft actor-critic}
\label{appendix:subsec:sac}
\para{Notation} We consider an infinite-horizon Markov decision process (MDP) defined as a tuple $\lbp \States, \Actions, \Rewards, \penv, \gamma \rbp$ \citep{sutton1998introduction}, 
where $\States$, $\Actions$, $\Rewards$ are the spaces of state, action and reward, respectively, $\penv(s_{t+1}|s_t,a_t)$ and $\penv(s_0)$ represent the transition probability and the initial state distribution,
$r(s_t,a_t)$ is a bounded reward function, and $\gamma$ is a discount factor. We write $\tau$ as a trajectory resulting from interacting with the environment under some policy $\pi(a_t|s_t)$. %

The entropy-regularized reinforcement learning \citep{ziebart2010modeling} is to learn a policy $\pi(a_t|s_t)$ that maximizes the following objective;
\eq{
\label{eq:ent-regul-rl}
\cL(\pi) = \eE_{\tau \sim \pi, \penv} \lbs \sum_{t=0}^{\infty} \gamma^{t} \lbp r(s_t, a_t) + \alpha H(\pi(\cdot | s_t)) \rbp \rbs
,
}
where $\alpha$ is an entropy regularization coefficient.
We define a soft state value function $V^{\pi}$ and s soft Q-function $Q^{\pi}$ as follows,
\begin{gather*}
V^{\pi}(s) = \eE_{\tau \sim \pi, \penv} \lbs \sum_{t=0}^{\infty} \gamma^{t} \lbp r(s_t, a_t) + \alpha H(\pi(\cdot | s_t)) \rbp \middle\vert s_0 = s \rbs \\
Q^{\pi}(s, a) = \eE_{\tau \sim \pi, \penv} \lbs r(s_t, a_t) + \sum_{t=1}^{\infty} \gamma^{t} \lbp r(s_t, a_t) + \alpha H(\pi(\cdot | s_t)) \rbp \middle\vert s_0 = s, a_0 = a \rbs
.
\end{gather*}
By using these definitions, we can rewrite $V^{\pi}$ and $Q^{\pi}$ as
$V^{\pi}(s) = \eE_{a \sim \pi} \lbs Q^{\pi}(s, a) \rbs + \alpha H(\pi(\cdot|s))$ and
$Q^{\pi}(s) = \lbs r(s, a) + \eE_{s' \sim \penv} \gamma V^{\pi}(s') \rbs$.

\para{Soft actor-critic}
One way to maximize (\ref{eq:ent-regul-rl}) is to minimize the following KL divergence,
\eqst{
\pi_{\tt{new}} = \argmin_{\pi} \kld \lbp \pi(\cdot | s_t) \middle\Vert \f{\exp \lbp Q^{\pi_{\tt{old}}}(s_t, \cdot) \rbp}{Z^{\pi_{\tt{old}}}(s_t)} \rbp %
,
}
where $Z^{\pi_{\tt{old}}}(s_t)$ is the normalizing constant $\int \exp \lbp Q^{\pi_{\tt{old}}}(s_t, a) \rbp da$. \citet{HaarnojaZAL18/icml} show that for finite state space the entropy-regularized expected return will be non-decreasing if the policy is updated by the above update rule. In practice, however, we do not have access to the value functions, %
so \citet{HaarnojaZAL18/icml} propose 
to update the policy by first approximating $Q^{\pi_{\tt{old}}}$ and $V^{\pi_{\tt{old}}}$ by some 
parametric functions
$Q_{\omega}$ and $V_{\nu}$, 
and training the policy by minimizing
\eqst{
\cL(\pi) = \eE_{s_t \sim \cD} \lbs 
\kld \lbp
\pi(a_t|s_t) \middle\Vert \f{\exp \lbp Q_{\omega}(s_t, \cdot) \rbp}{Z_{\omega}(s_t)}
\rbp
\rbs
,
}
where $\cD$ is a replay buffer that stores all the %
past experience. 
The soft Q-function and soft state value function will be trained by minimizing the following objectives,
\begin{gather*}
\cL(V_{\nu}) = \eE_{s_t \sim \cD} \lbs \f{1}{2} \lbp 
    V_{\nu}(s_t) - \eE_{\small a_t \sim \pi} \lbs Q_{\omega}(s_t, a_t) - \alpha \log \pi(a_t|s_t) \rbs \rbp^2 \rbs \nn\\
\cL(Q_{\omega}) = \eE_{s_t, a_t \sim \cD} \lbs \f{1}{2} \lbp
    Q_{\omega}(s_t, a_t) - \hat{Q}(s_t, a_t) 
    \rbp^2 \rbs
,
\end{gather*}
where $\hat{Q}(s_t, a_t) \doteq r(s_t, a_t) + \gamma \eE_{s_{t+1} \sim \penv} [V_{\bar{\nu}}(s_{t+1})]$ and $V_{\bar{\nu}}$ is a target value network. For the target value network, SAC follows \citet{MnihKSRVBGRFOPB15/nature}: $V_{\bar{\nu}}$ is defined as a polyak-averaged model \citep{polyak1992acceleration} of $V_{\nu}$. Note that $V_{\nu}$ is inferred from $Q_{\omega}$ via Monte Carlo, \emph{i.e.} $V_{\nu}(s_t) \doteq  Q_{\omega}(s_t, a_t) - \alpha \log \pi(a_t|s_t)$ where $a_t \sim \pi(a_t| s_t)$. Moreover, we follow the common practice to use the clipped double Q-functions \citep{hasselt2010double, FujimotoHM18/icml} in our implementations.

\subsection{SAC-AR-DAE and its implementations}
\label{app:sac-ar-dae}
\para{Main algorithm}
Our goal is to train an arbitrarily parameterized policy within the SAC framework. We apply AR-DAE to approximate the training signal for policy. Similar to the implicit posterior distributions in the VAE experiments, the policy consists of a simple tractable noise distribution $\pi(\epsilon)$ and a mapping $g_{\phi}: \epsilon, s \mapsto a$. The gradient of $\cL(\pi)$ wrt the policy parameters can be written as
\eqst{
\nabla_{\phi} \cL(\pi) = \eE_{
    \substack{s_t \sim \cD \\
    \epsilon \sim \pi}} 
    \lbs \lbs \nabla_{a} \log \pi_{\phi} (a|s_t)|_{a=g_{\phi}(\epsilon, s_t)} -  \nabla_{a} Q_{\omega}(s_t, a)|_{a=g_{\phi}(\epsilon, s_t)} \rbs^{\intercal} \mathbf{J}_{\phi}g_{\phi}(\epsilon, s_t) \rbs
.
}

Let $f_{ar,\theta}$ be AR-DAE which approximates $\nabla_{a} \log \pi_{\phi} (a|s)$ trained using Equation (\ref{eq:ardae++}). Specifically for the SAC experiment, AR-DAE is indirectly parameterized as the gradient of an unnormalized log-density function $\psi_{ar, \theta}: a, s, \sigma \mapsto \eR$ as in,
\eqst{
f_{ar,\theta}(a;s, \sigma) \doteq \nabla_{a} \psi_{ar, \theta}(a;s, \sigma)
.
}
As a result, $\log \pi(a|s)$ can also be approximated by using $\psi_{ar, \theta}$: $\log \pi(a|s) \approx \psi_{ar, \theta}(a;s, \sigma)|_{\sigma=0} - \log Z_{\theta}(s)$, where $Z_{\theta}(s) = \int \exp \lbp \psi_{ar, \theta}(a;s, \sigma)|_{\sigma=0} \rbp da$.

Using AR-DAE, we can modify the objective function $\cL(V_{\nu})$ to be
\eqst{
\hat{\cL}(V_{\nu}) = \eE_{s_t \sim \cD} \lbs \f{1}{2} \lbp 
    V_{\nu}(s_t) - \eE_{\small a_t \sim \pi} \lbs Q_{\omega}(s_t, a_t) - 
    \psi_{ar, \theta}(a_t;s_t, \sigma)|_{\sigma=0} \rbs 
    - \log Z_{\theta}(s_t)
    \rbp^2 \rbs
.
}
The same applies to $\cL(Q_{\omega})$.
We also use the polyak-averaged target value network and one-sample Monte-Carlo estimate as done in SAC. Finally, the gradient signal for the policy can be approximated using AR-DAE:
\eqst{
\hat{\nabla}_{\phi} \cL(\pi) \doteq \eE_{
    \substack{s_t \sim \cD \\
    \epsilon \sim \pi}} 
    \lbs \lbs
    f_{ar,\theta}(g_{\phi}(\epsilon, s_t);s_t, \sigma)|_{\sigma=0} -  \nabla_{a} Q_{\omega}(s_t, a)|_{a=g_{\phi}(\epsilon, s_t)} \rbs^{\intercal} \mathbf{J}_{\phi}g_{\phi}(\epsilon, s_t) \rbs
.
}
We summarize all the details in Algorithm \ref{alg:sac_ardae}.

\begin{algorithm}%
\caption{SAC-AR-DAE}
\label{alg:sac_ardae}
\begin{algorithmic}
\STATE {\bfseries Input:} Mini-batch size $n_{\tt{data}}$; replay buffer $\cD$; number of epoch 
$T$;
learning rates $\alpha_{\theta},\alpha_{\phi},\alpha_{\omega}, \alpha_{\nu}$\\
\STATE Initialize value function $V_\nu(s)$, critic $Q_\omega(s,a)$, policy $\pi_{\phi}(a|s)$, and AR-DAE $f_{ar, \theta}(a|s)$
\STATE Initialize replay buffer $\cD \gets \varnothing$  
\FOR{$\text{epoch}=1,...,T$} 
\STATE Initialize a state from $s_0 \sim \penv(s_0)$ %
\FOR{$t = 0 \dots $}
\STATE $a \sim \pi_{\phi}(.|s_t)$
\STATE $(r_t, s_{t+1}) \sim \penv(\cdot|s_t,a_t)$ %
\STATE $\cD \gets \cD \cup \{(s_t,a_t,r_t,s_{t+1})\}$%
\FOR{each learning step}
    \STATE Draw $n_{\tt{data}}$ number of $(s_t,a_t,r_t,s_{t+1})$s from $\cD$
    \FOR{$k = 0 \dots N_d$}
        \STATE Draw $n_{a}$ actions per state from $a \sim \pi_{\phi}(a|s)$ %
        \STATE $\delta_i \gets \delta_{\tt{scale}} S_{a|s_i} \textrm{ for } i=1,\dots, n_{\tt{data}}$
        \STATE Draw $n_{\sigma}$ number of $\sigma_i$s per $a$ from $\sigma_i \sim N(0,\delta_i^2)$ %
        \STATE Draw $n_{\tt{data}}n_{a}n_{\sigma}$ number of $u$s from $u \sim N(0, I)$
        \STATE Update $\theta$ using gradient $\nabla_{\theta} \mathcal{L}_{f_{ar}}$ with learning rate $\alpha_\theta$
    \ENDFOR
    \STATE Update $\nu$ using gradient $\nabla_{\nu} \hat{\cL}_{V}$ with learning rate $\alpha_\nu$ %
    \STATE Update $\omega$ using gradient $\nabla_{\omega} \hat{\cL}_{Q}$ with learning rate $\alpha_\omega$ %
    \STATE Update $\phi$ using gradient $\hat{\nabla}_{\phi} \mathcal{L}_{\pi}$ which is approximated with $f_{ar, \theta}(a|s)$
    \STATE $\bar{\nu} \gets \tau \nu + (1-\tau)\bar{\nu}$ %
\ENDFOR
\ENDFOR
\ENDFOR
\end{algorithmic}
\end{algorithm}

\para{Bounded action space}
The action space of all of our environments is an open cube $(-1, 1)^{d_a}$,
where $d_a$ is the dimensionality of the action.
To implement the policy, we apply the hyperbolic tangent function. That is, $a := \tanh(g_{\phi}(\epsilon, s_t))$, where the output of $g_{\phi}$ (denoted as $\tilde{a}$) is in $(-\infty, \infty)$. %
Let $\tilde{a}_{i}$ be the $i$-th element of $\tilde{a}$. 
By the change of variable formula, $\log \pi (a|s) = \log \pi(\tilde{a}|s) - \sum_{i=1}^{d_a} \log (1- \tanh^2(\tilde{a}_{i}))$. 

In our experiments, we train AR-DAE on the pre-$\tanh$ action $\tilde{a}$. This implies that AR-DAE approximate $\nabla_{\tilde{a}} \log \pi(\tilde{a}|s)$. 
We correct the change of volume induced by the tanh using %
\eqst{
\nabla_{\tilde{a}} \log \pi(a|s) = \nabla_{\tilde{a}} \log \pi(\tilde{a}|s) + 2\tanh(\tilde{a})
.
}
To sum up, 
the update of the policy follows the approximated gradient
\eqst{
\hat{\nabla}_{\phi} \cL(\pi) \doteq \eE_{
    \substack{s_t \sim \cD \\
    \epsilon \sim \pi}} 
    \lbs \lbs
    f_{ar,\theta}(g_{\phi}(\epsilon, s_t);s_t, \sigma)|_{\sigma=0} + 2\tanh(g_{\phi}(\epsilon, s_t)) -  \nabla_{\tilde{a}} Q_{\omega}(s_t, \tanh(\tilde{a}))|_{\tilde{a}=g_{\phi}(\epsilon, s_t)} \rbs^{\intercal} \mathbf{J}_{\phi}g_{\phi}(\epsilon, s_t) \rbs
.
}

\para{Estimating normalizing constant}
In order to train SAC-AR-DAE in practice, efficient computation of $\log Z_{\theta}(s)$ is required. 
We propose to estimate the normalizing constant \citep{geyer1991reweighting} using importance sampling. %
Let $h(a|s)$ be the proposal distribution.
We compute the following (using the log-sum-exp trick to ensure numerical stability)
\eqst{
\log Z_{\theta}(s) 
&= \log \int \exp \lbp \psi_{ar, \theta}(a;s, \sigma)|_{\sigma=0} \rbp da \nn\\
&= \log \eE_{a \sim h} \lbs \exp\lbp\psi_{ar, \theta}(a;s, \sigma)|_{\sigma=0} - \log h(a|s) \rbp \rbs \nn\\
&\approx \log \f{1}{N_Z}\sum_{j}^{N_Z} \lbs \exp\lbp\psi_{ar, \theta}(a_j;s, \sigma)|_{\sigma=0} - \log h(a_j|s) - A \rbp \rbs + A %
,
}
where $a_j$ is the $j$-th action sample from $h$ and $A := \max_{a_j} \exp\lbp\psi_{ar, \theta}(a_j;s, \sigma)|_{\sigma=0} - \log h(a_j|s) \rbp$.
For the proposal distribution, we use $h(a|s) \doteq N(\mu(s), c I)$, where $\mu(s) \doteq \psi_{ar, \theta}(g_{\phi}(\epsilon, s);s, \sigma)|_{\epsilon=0, \sigma=0}$ and $c$ is some constant. We set $c$ to be $\log c = -1$.

\para{Target value calibration}
In order to train the Q-function more efficiently, we calibrate its target values. Training the policy only requires estimating the gradient of the Q-function wrt the action, not the value of the Q-function itself. This means that while optimizing $Q_{\omega}$ (and $V_{\nu}$), we can subtract some constant from the true target to center it. 
In our experiment, this calibration is applied when we use one-sample Monte-Carlo estimate and the polyak-averaged Q-network $Q_{\bar{\omega}}$. That is, $\cL(Q_{\omega})$ can be rewritten as,
\eqst{
\cL(Q_{\omega}) = \eE_{\substack{s_t, a_t, s_{t+1} \sim \cD\\ a_{t+1} \sim \pi}} \lbs \f{1}{2} \lbp
    Q_{\omega}(s_t, a_t) + B - 
    r(s_t, a_t) - \gamma \lbp Q_{\bar{\omega}}(s_{t+1}, a_{t+1}) - \alpha \log \pi(a_{t+1}|s_{t+1})
    \rbp
    \rbp^2 \rbs
.
}
where $B$ is a running average of the expected value of $\gamma \alpha \log \pi(a|s)$ throughout training. %

\para{Jacobian clamping}
In addition, %
we found that the implicit policies can potentially collapse to point masses. %
To mitigate this, we regularize the implicit distributions by controlling the Jacobian matrix of the policy wrt the noise source as in \citet{odena2018generator, kumar2020regularized}, aka \emph{Jacobian clamping}.
The goal is to ensure all singular values of Jacobian matrix of pushforward mapping to be higher than some constant.
In our experiments, we follow the implementation of \citet{kumar2020regularized}: (1) stochastic estimation of the singular values of Jacobian matrix at every noise, and the Jacobian is estimated by finite difference approximation, and (2) use of the penalty method \citep{bertsekas2016nonlinear} to enforce the constraint. The resulting regularization term is
\begin{align*}
    \mathcal{L}_{\textrm{reg}}(\pi) = \eE_{
    \substack{s_t \sim \cD \\
    \epsilon \sim \pi \\
    v \sim N(0, I)}} \left[
    \min\left(
    \frac{\Vert g_{\phi}(\epsilon + \xi v, s_t) - g_{\phi}(\epsilon, s_t) \Vert^{2}_{2}}{\xi^2 \Vert v\Vert^2}
    - \eta, 0 \right)^2 \right],
\end{align*}
where $\eta, \xi > 0$, and $n_{\tt{perturb}}$ number of the perturbation vector $v$ is sampled. We then update policy $\pi$ with $\hat{\nabla}_{\phi} \cL(\pi) + \lambda \nabla_{\phi} \mathcal{L}_{\textrm{reg}}(\pi)$ where $\lambda$ is increased throughout training. We set $\lambda = 1+i^{\nu}/1000$ at $i$-th iteration and $\nu \in [1.1, 1.3]$. %

\begin{figure}%
\centering

\vspace*{-0.3cm}

\begin{subfigure}[b]{0.588\textwidth}
\centering
\hspace*{0.4cm}\includegraphics[width=\textwidth]{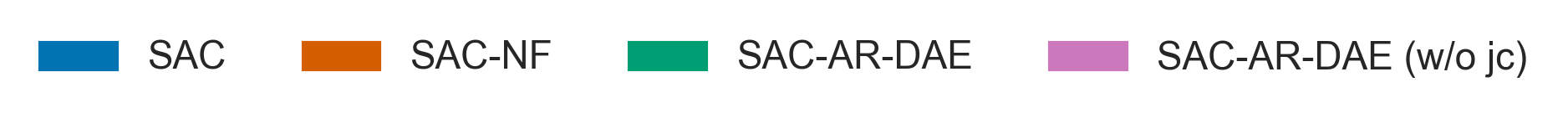}
\end{subfigure}

\vspace*{-0.2cm}

\begin{subfigure}[b]{\textwidth}
\centering
\hspace*{-0.3cm}\includegraphics[width=\textwidth]{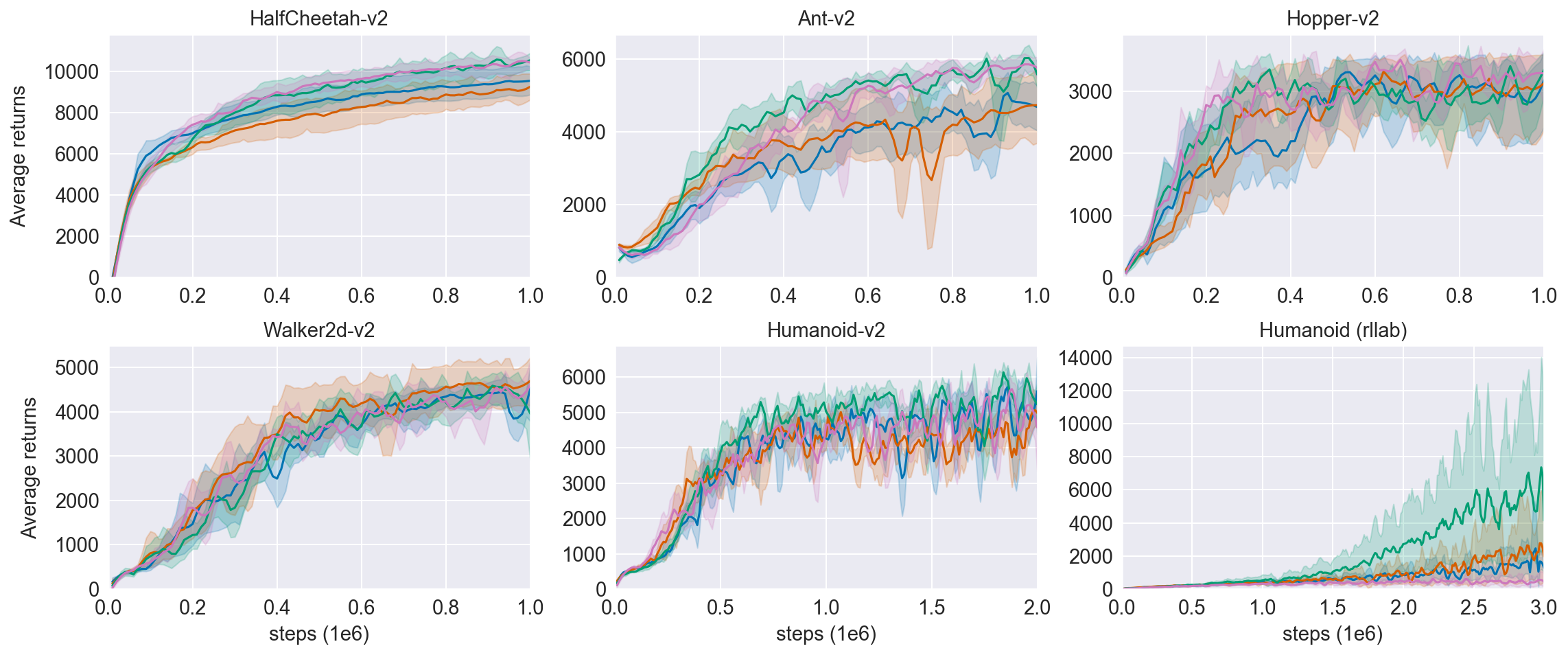}
\end{subfigure}

\vspace*{-0.3cm}

\caption{
Additional results on SAC-AR-DAE, ablating Jacobian clamping regularization on implicit policy distributions in comparison with the rest.
}
\label{fig:results-rl-extended}
\end{figure}

\subsection{Experiments}
For the SAC-AR-DAE experiments, 
aside from the common practice for SAC,
we follow the experiment settings from \citet{MazoureDDHP19}
and
sample from a uniform policy for a fixed number of initial interactions (denoted as {\it warm-up}). We also adopt the same network architecture for the Q-network, discounting factor $\gamma$, entropy regularization coefficient $\alpha$, and target smoothing coefficient $\tau$.
For AR-DAE, we use the same network architecture as VAE. We also rescale the unbounded action $\tilde{a}$ by $s$ for better conditioning. The details of hyperparameters are described in Table \ref{table:rl-hyperparameters}. 

We run five experiments for each environment without fixing the random seed.~For every 10k steps of environment interaction, the average return of the policy is evaluated with 10 independent runs. For visual clarify, the learning curves are smoothed by second-order polynomial filter with a window size of 7 \cite{savitzky1964smoothing}. For each method, we evaluate the maximum average return: we take the maximum of the average return for each experiment and the average of the maximums over the five random seeds. We also report `normalized average return', approximately area under the learning curves: we obtain the numerical mean of the `average returns' over iterates. We run SAC and SAC-NF with the hyperparameters reported in \citet{MazoureDDHP19}. %

\subsection{Additional Experiments}

In addition to the main results in Figure \ref{fig:results-rl} and Table \ref{tab:sac-result-max-main}, we also compare the effect of Jacobian clamping regularization on implicit policy distribution in SAC-AR-DAE. In each environment, the same hyperparameters are used in SAC-AR-DAEs except for the regularization. Our results are presented in Figure \ref{fig:results-rl-extended} and Table \ref{tab:sac-result-max-full}, \ref{tab:sac-result-auc-full}.

The results shows that 
Jacobian clamping regularization improves the performance of SAC-AR-DAE in general, especially for {\it Humanoid-rllab}. %
In {\it Humanoid-rllab}, we observe that implicit policy degenerates to point masses without the Jacobian clamping, potentially due to the error of AR-DAE. However, the Jacobian clamping helps to avoid the degenerate distributions, and the policy facilitates AR-DAE-based entropy gradients.

\begin{table}[H]
\centering
\small
\begin{tabular}{lcccc}
\toprule
 & \multicolumn{1}{c}{SAC} & \multicolumn{1}{c}{SAC-NF} & \multicolumn{1}{c}{SAC-AR-DAE} & \multicolumn{1}{c}{SAC-AR-DAE (w/o jc)} \\
\midrule
HalfCheetah-v2   & 9695 $\pm$ 879 & 9325 $\pm$ 775 & \textbf{10907 $\pm$ 664} & 10677 $\pm$ 374 \\
Ant-v2           & 5345 $\pm$ 553 & 4861 $\pm$ 1091 & \textbf{6190 $\pm$ 128} & 6097 $\pm$ 140 \\
Hopper-v2        & 3563 $\pm$ 119 & 3521 $\pm$ 129 & 3556 $\pm$ 127 & \textbf{3634 $\pm$ 45} \\
Walker-v2        & 4612 $\pm$ 249 & 4760 $\pm$ 624 & 4793 $\pm$ 395 & \textbf{4843 $\pm$ 521} \\
Humanoid-v2      & 5965 $\pm$ 179 & 5467 $\pm$ 44 & \textbf{6275 $\pm$ 202} & 6268 $\pm$ 77 \\
Humanoid (rllab) & 6099 $\pm$ 8071 & 3442 $\pm$ 3736 & \textbf{10739 $\pm$ 10335} & 761 $\pm$ 413 \\
\bottomrule
\end{tabular}%
\caption{Maximum average return. $\pm$ corresponds to one standard deviation over five random seeds.}
\label{tab:sac-result-max-full}
\end{table}

\begin{table}[H]
\centering
\small
\begin{tabular}{lcccc}
\toprule
 & \multicolumn{1}{c}{SAC} & \multicolumn{1}{c}{SAC-NF} & \multicolumn{1}{c}{SAC-AR-DAE} & \multicolumn{1}{c}{SAC-AR-DAE (w/o jc)} \\
\midrule
HalfCheetah-v2   & 8089 $\pm$ 567 & 7529 $\pm$ 596 & 8493 $\pm$ 602 & \textbf{8636 $\pm$ 307} \\
Ant-v2           & 3280 $\pm$ 553 & 3440 $\pm$ 656 & \textbf{4335 $\pm$ 241} & 4015 $\pm$ 363 \\
Hopper-v2        & 2442 $\pm$ 426 & 2480 $\pm$ 587 & 2631 $\pm$ 160 & \textbf{2734 $\pm$ 194} \\
Walker-v2        & 3023 $\pm$ 271 & \textbf{3317 $\pm$ 455} & 3036 $\pm$ 271 & 3094 $\pm$ 209 \\
Humanoid-v2      & 3471 $\pm$ 505 & 3447 $\pm$ 260 & \textbf{4215 $\pm$ 170} & 3808 $\pm$ 137 \\
Humanoid (rllab) & 664 $\pm$ 321 & 814 $\pm$ 630 & \textbf{2021 $\pm$ 1710} & 332 $\pm$ 136 \\
\bottomrule
\end{tabular}%
\caption{Normalized average return. $\pm$ corresponds to one standard deviation over five random seeds.%
}
\label{tab:sac-result-auc-full}
\end{table}

\section{Improved techniques for training AR-DAE and implicit models}
\label{app:heuristics}
In order to improve and stabilize the training of both the generator and AR-DAE,
we explore multiple heuristics. 

\subsection{AR-DAE}
\para{Activity function} 
During preliminary experiments, we observe that \emph{smooth activation functions} are crucial in parameterizing AR-DAE as well as the residual form of regular DAE. We notice that ReLU gives less reliable log probability gradient for low density regions.

\para{Number of samples and updates}
In the VAE and RL experiments, it is important to keep AR-DAE up-to-date with the generator (i.e. posterior and policy). %
As discussed in Appendix \ref{appendix:energy-fitting}, we found that increasing the number of AR-DAE updates helps a lot. 
Additionally, %
we notice that increasing $n_{z}$ is more helpful than increasing $n_{\tt{data}}$ given $n_{\tt{data}}n_{z}$ is fixed.

\para{Scaling-up and zero-centering data}
To avoid using small learning rate for AR-DAE in the face of sharp distributions with small variance, %
we choose to scale up the input of AR-DAE.
As discussed in Appendix \ref{app:subsec:ardae-implementations}, we also zero-center the latent samples (or action samples) to train AR-DAE. This allows AR-DAE to focus more on modeling the dispersion of the distribution rather than %
where most of the probability mass resides. 

\subsection{Implicit distributions}
\para{Noise source dimensionality}
We note that the implicit density models %
can potentially be degenerate and do not admit a density function. 
For example, in Appendix \ref{appendix:energy-fitting} we show that increasing the dimensionality of the noise source improves the qualities of the implicit distributions. %

\para{Jacobian clamping}
Besides of increasing noise source dimensionality, we can consider Jacobian clamping distributions to prevent implicit posteriors from collapsing to point masses. As pointed out in Appendix \ref{app:sac-ar-dae}, we observe that using this regularization technique can prevent degenerate distributions in practice, as it at least regularizes the mapping locally if its Jacobian is close to singular.

\newpage
\null
\vfill
\begin{table}[!h]
\scriptsize
\centering
\begin{tabular}{cccccc}
\toprule
& \multirow{2}{*}{ $p(x|z)$ } & \multicolumn{4}{c}{ $q(z|x)$ } \\
&  & Common & Gaussian & HVI & implicit \\
\midrule
\begin{tabular}{@{}c@{}}
{\it MLP} \\
\tt{toy}
\end{tabular} & \begin{tabular}{@{}c@{}}
$\lbs 2, 256 \rbs$ \\
$\lbs 256, 256 \rbs \times 2$ \\
$\lbs 256, d_x\rbs \times 2$
\end{tabular} & $\lbs d_x, 256\rbs$ & - & - & \begin{tabular}{@{}c@{}}
$\lbs 256+d_{\epsilon}, 256\rbs$ \\
$\lbs 256, 256\rbs$ \\
$\lbs 256, d_z\rbs$
\end{tabular} \\
\cmidrule(l){1-6}
\begin{tabular}{@{}c@{}}
{\it MLP} \\
\tt{dbmnist}
\end{tabular} & \begin{tabular}{@{}c@{}}
$\lbs d_z, 300 \rbs$ \\
$\lbs 300, d_x\rbs$
\end{tabular} & $\lbs d_x, 300\rbs$ & \begin{tabular}{@{}c@{}}
$\lbs 300, d_z \rbs \times 2$
\end{tabular} & \begin{tabular}{@{}c@{}}
$\lbs 300, d_z \rbs$ $\times$ 2 \\
(or $\lbs 300, d_{z_0}\rbs$ $\times$ 2)
\end{tabular} & \begin{tabular}{@{}c@{}}
$\lbs 300+d_{\epsilon}, 300\rbs$ \\
$\lbs 300, d_z\rbs$
\end{tabular} \\
\cmidrule(l){1-6}
\begin{tabular}{@{}c@{}}
{\it Conv} \\
\tt{dbmnist}
\end{tabular} & \begin{tabular}{@{}c@{}}
$\lbs d_z, 300 \rbs$ \\
$\lbs 300, 512 \rbs$ \\
$\lbs 32, 32, 5 \times 5, 2, 2, \textrm{deconv} \rbs$ \\
$\lbs 32, 16, 5 \times 5, 2, 2, \textrm{deconv} \rbs$ \\
$\lbs 16, 1, 5 \times 5, 2, 2, \textrm{deconv} \rbs$
\end{tabular} & \begin{tabular}{@{}c@{}}
$\lbs 1, 16, 5 \times 5, 2, 2 \rbs$ \\
$\lbs 16, 32, 5 \times 5, 2, 2 \rbs$ \\
$\lbs 32, 32, 5 \times 5, 2, 2 \rbs$
\end{tabular} & \begin{tabular}{@{}c@{}}
$\lbs 512, 800 \rbs$ \\
$\lbs 800, d_z \rbs \times 2$
\end{tabular} & \begin{tabular}{@{}c@{}}
$\lbs 512, 800 \rbs$ \\
$\lbs 800, d_z \rbs \times 2$ \\
(or $\lbs 800, d_(z_0) \rbs \times 2$)
\end{tabular} & \begin{tabular}{@{}c@{}}
$\lbs 512+d_{\epsilon}, 800 \rbs$ \\
$\lbs 800, d_z \rbs$
\end{tabular} \\
\cmidrule(l){1-6}
\begin{tabular}{@{}c@{}}
{\it ResConv} \\
\tt{dbmnist} \\
(or \tt{sbmnist})
\end{tabular} & \begin{tabular}{@{}c@{}}
$\lbs d_z, 450 \rbs$ \\
$\lbs 450, 512 \rbs$ \\
$\lbs \textrm{upscale by 2} \rbs$ \\
$\lbs 32, 32, 3 \times 3, 1, 1, \textrm{res} \rbs$ \\
$\lbs 32, 32, 3 \times 3, 1, 1, \textrm{res} \rbs$ \\
$\lbs \textrm{upscale by 2} \rbs$ \\
$\lbs 32, 16, 3 \times 3, 1, 1, \textrm{res} \rbs$ \\
$\lbs 16, 16, 3 \times 3, 1, 1, \textrm{res} \rbs$ \\
$\lbs \textrm{upscale by 2} \rbs$ \\
$\lbs 16, 1, 3 \times 3, 1, 1, \textrm{res} \rbs$
\end{tabular} & \begin{tabular}{@{}c@{}}
$\lbs 1, 16, 3 \times 3, 2, 1, \textrm{res} \rbs$ \\
$\lbs 16, 16, 3 \times 3, 1, 1, \textrm{res} \rbs$ \\
$\lbs 16, 32, 3 \times 3, 2, 1, \textrm{res} \rbs$ \\
$\lbs 32, 32, 3 \times 3, 1, 1, \textrm{res} \rbs$ \\
$\lbs 32, 32, 3 \times 3, 2, 1, \textrm{res} \rbs$ \\
$\lbs 512, 450, \textrm{res} \rbs$ \\
\end{tabular} & \begin{tabular}{@{}c@{}}
$\lbs 450, d_z \rbs \times 2$
\end{tabular} & \begin{tabular}{@{}c@{}}
$\lbs 450, 450 \rbs$ \\
$\lbs 450, d_z \rbs \times 2$ \\
(or $\lbs 450, d_{z_0} \rbs \times 2)$
\end{tabular} & \begin{tabular}{@{}c@{}}
$\lbs 450+d_{\epsilon}, 450, \textrm{res} \rbs$ \\
$\lbs 450, d_z, \textrm{res} \rbs$
\end{tabular} \\
\bottomrule
\end{tabular}
\caption{Network architectures for the VAE experiments. Fully-connected layers are characterized by [input size, output size], and convolutional layers by [input channel size, output channel size, kernel size, stride, padding]. ``res'' indicates skip connection, aka residual layer \citep{HeZRS16/cvpr}. Deconvolutional layer is marked as ``deconv''.}
\label{table:vae-networks}
\end{table}
\vfill
\begin{table}[!h]
\scriptsize
\centering
\begin{tabular}{cccccccc}
\toprule
& & & \multicolumn{2}{c}{ {\it MLP} } & {\it Conv} & \multicolumn{2}{c}{ {\it ResConv} } \\
& &  & \tt{toy} & \tt{dbmnist} & \tt{dbmnist} & \tt{dbmnist} & \tt{sbmnist} \\
\midrule
\multirow{15}{*}{ AR-DAE } & \multirow{6}{*}{ model } & parameterization & gradient & gradient & gradient & residual & residual \\
& & network & mlp-concat & mlp-concat & mlp-concat & mlp-concat & mlp-concat \\
& & $m_{\tt{fc}}$ & 3 & 5 & 5 & 5 & 5 \\
& & $m_{\tt{enc}}$ & 3 & 5 & 5 & 5 & 5 \\
& & activation & softplus & softplus & softplus & softplus & softplus \\
& & $d_h$ & 256 & 256 & 256 & 512 & 512 \\
& & $s$ & 10000 & 10000 & 10000 & 100 & 100 \\
\cmidrule{2-8}
& \multirow{7}{*}{ learning } & $n_{z}$ & 256 & 625 & 256 & 625 & 625 \\
& & $n_{\tt{data}}$ & 512 & 128 & 128 & 128 & 128 \\
& & $n_{\sigma}$ & 1 & 1 & 1 & 1 & 1 \\
& & $N_d$ & 1 & \{1,2\} & \{1,2\} & 2 & 2 \\
& & $\delta_{\tt{scale}}$ & 0.1 & \{0.1, 0.2, 0.3\} & \{0.1, 0.2, 0.3\} & \{0.1, 0.2, 0.3\} & \{0.1, 0.2, 0.3\} \\
& & optimizer & rmsprop, 0.5 & rmsprop, 0.5 & rmsprop, 0.9 & rmsprop, 0.9 & rmsprop, 0.9 \\
& & learning rate $\alpha_{\theta}$ & 0.0001 & 0.0001 & 0.0001 & 0.0001 & 0.0001 \\
\midrule
\multirow{8}{*}{ Encoder/decoder } & \multirow{3}{*}{ model } & network & mlp & mlp & conv & rescov & rescov \\
& & $d_z$ & 2 & 32 & 32 & 32 & 32 \\
& & $d_{z_0}$ or $d_{\epsilon}$ & 10 & 100 & 100 & 100 & 100 \\
\cmidrule{2-8}
& \multirow{5}{*}{ learning } & $n_{\tt{data}}$ & 512 & 128 & 128 & 128 & 128 \\
& & optimizer & adam, 0.5, 0.999 & adam, 0.5, 0.999 & adam, 0.5, 0.999 & adam ,0.9, 0.999 & adam 0.9, 0.999 \\
& & learning rate $\alpha_{\phi, \omega}$ & 0.0001 & 0.0001 & 0.0001 & \{0.001, 0.0001\} & \{0.001, 0.0001\} \\
& & $\beta$-annealing & no & no & no & \{no, 50000\} & \{no, 50000\} \\
& & \re-train with $\tt{train}$+$\tt{val}$ & no & no & no & no & yes \\
\midrule
\multirow{3}{*}{ Evaluation } & & polyak (decay) & - & no & no & no & 0.998 \\
& & polyak (start interation) & - & no & no & no & \{0, 1000, 5000, 10000\} \\
& & $n_{\textrm{eval}}$ & - & 40000 & 40000 & 20000 & 20000 \\
\bottomrule
\end{tabular}
\caption{Hyperparameters for the VAE experiments. $\tt{toy}$ is the 25 Gaussian dataset. $\tt{dbmnist}$ and $\tt{sbmnist}$ are dynamically and statically binarized MNIST, respectively.}
\label{table:vae-hyperparameters}
\end{table}
\vfill

\newpage
\null
\vfill
\begin{table}[!h]
\scriptsize
\centering
\begin{tabular}{ccccccccc}
\toprule
& & & HalfCheetah-v2 & Ant-v2 & Hopper-v2 & Walker-v2 & Humanoid-v2 & Humanoid (rllab) \\
\midrule
\multirow{14}{*}{AR-DAE} & \multirow{7}{*}{model} & parameterization & gradient & gradient & gradient & gradient & gradient & gradient \\
& & network & mlp & mlp & mlp & mlp & mlp & mlp \\
& & $m_{\tt{fc}}$ & 5 & 5 & 5 & 5 & 5 & 5 \\
& & $m_{\tt{enc}}$ & 5 & 5 & 0 & 1 & 1 & 1 \\
& & activation & elu & elu & elu & elu & elu & elu \\
& & $d_h$ & 256 & 256 & 256 & 256 & 256 & 256 \\
& & $s$ & 10000 & 10000 & 10000 & 10000 & 10000 & 10000 \\
\cmidrule{2-9}
& \multirow{7}{*}{learning} & $n_{a, \tt{dae}}$ & 128 & 64 & 128 & 128 & 64 & 64 \\
& & $n_{\tt{data}}$ & 256 & 256 & 256 & 256 & 256 & 256 \\
& & $n_{\sigma}$ & 1 & 1 & 1 & 1 & 4 & 4 \\
& & $N_d$ & 1 & 1 & 1 & 1 & 1 & 1 \\
& & $\delta_{\tt{scale}}$ & 0.1 & 0.1 & 0.1 & 0.1 & 0.1 & 0.1 \\
& & optimizer & adam, 0.9, 0.999 & adam, 0.9, 0.999 & adam, 0.9, 0.999 & adam, 0.9, 0.999 & adam, 0.9, 0.999 & adam, 0.9, 0.999 \\
& & learning rate $\alpha_{\theta}$ & 0.0003 & 0.0003 & 0.0003 & 0.0003 & 0.0003 & 0.0003 \\
\midrule
\multirow{10}{*}{policy} & \multirow{6}{*}{model} & network & mlp & mlp & mlp & mlp & mlp & mlp \\
& & $m_{\tt{fc}}$ & 1 & 1 & 1 & 2 & 2 & 2 \\
& & $m_{\tt{enc}}$ & 1 & 1 & 2 & 1 & 3 & 3 \\
& & activation & elu & elu & elu & elu & elu & elu \\
& & $d_h$ & 256 & 256 & 256 & 256 & 64 & 64 \\
& & $d_{\epsilon}$ & 10 & 10 & 10 & 10 & 32 & 100 \\
\cmidrule{2-9}
& \multirow{4}{*}{learning} & $n_{\tt{perturb}}$ & 10 & 10 & 10 & 10 & 10 & 10 \\
& & optimizer & adam, 0.9, 0.999 & adam, 0.9, 0.999 & adam, 0.9, 0.999 & adam, 0.9, 0.999 & adam, 0.9, 0.999 & adam, 0.9, 0.999 \\
& & $\xi, \eta, \nu$ & 0.01, 0.1, 1.1 & 0.01, 0.01, 1.1 & 0.01, 0.01, 1.1 & 0.01, 0.01, 1.1 & 0.01, 0.1, 1.3 & 0.01, 0.1, 1.3 \\
& & learning rate $\alpha_{\phi}$ & 0.0003 & 0.0003 & 0.0003 & 0.0003 & 0.0003 & 0.0003 \\
\midrule
\multirow{6}{*}{Q-network} & \multirow{4}{*}{model} & network & mlp & mlp & mlp & mlp & mlp & mlp \\
& & $m_{\tt{fc}}$ & 2 & 2 & 2 & 2 & 2 & 2 \\
& & activation & relu & relu & relu & relu & relu & relu \\
& & $d_h$ & 256 & 256 & 256 & 256 & 256 & 256 \\
\cmidrule{2-9}
& \multirow{2}{*}{learning} & optimizer & adam, 0.9, 0.999 & adam, 0.9, 0.999 & adam, 0.9, 0.999 & adam, 0.9, 0.999 & adam, 0.9, 0.999 & adam, 0.9, 0.999 \\
& & learning rate $\alpha_{\omega}$ & 0.0003 & 0.0003 & 0.0003 & 0.0003 & 0.0003 & 0.0003 \\
\midrule
\multirow{6}{*}{general} &  & $\alpha$ & 0.05 & 0.05 & 0.05 & 0.05 & 0.05 & 0.05 \\
& & $\tau$ & 0.005 & 0.005 & 0.005 & 0.005 & 0.005 & 0.005 \\
& & $\gamma$ & 0.99 & 0.99 & 0.99 & 0.99 & 0.99 & 0.99 \\
& & $n_Z$ & 100 & 10 & 100 & 100 & 10 & 10 \\
& & target calibration & no & no & yes & no & no & no \\
& & warm-up & 5000 & 10000 & 10000 & 10000 & 10000 & 10000 \\
\bottomrule
\end{tabular}
\caption{Hyperparameters for RL experiments.}
\label{table:rl-hyperparameters}
\end{table}
\vfill

\end{document}